\documentclass{article}

\usepackage{microtype}
\usepackage{dsfont}
\usepackage{graphicx}
\usepackage{subcaption}
\usepackage{booktabs} 

\usepackage{hyperref}

\usepackage[accepted]{icml2026}

\usepackage{amsmath}
\usepackage{amssymb}
\usepackage{mathtools}
\usepackage{amsthm}
\usepackage{enumitem}

\usepackage[capitalize,noabbrev]{cleveref}

\theoremstyle{plain}
\newtheorem{theorem}{Theorem}[section]
\newtheorem*{theorem*}{Theorem}
\newtheorem{proposition}[theorem]{Proposition}
\newtheorem*{proposition*}{Proposition}

\theoremstyle{definition}
\newtheorem{definition}[theorem]{Definition}

\theoremstyle{remark}
\newtheorem{remark}[theorem]{Remark}
\usepackage{xspace}

\newcommand{\name}{CoDiLA\xspace}
\newcommand{\mask}{\texttt{[MASK]}\xspace}
\newcommand{\rebuttal}[1]{{#1}}

\usepackage[textsize=tiny]{todonotes}

\icmltitlerunning{Locally Coherent Parallel Decoding in Diffusion Language Models}

\begin{document}

\twocolumn[
  \icmltitle{Locally Coherent Parallel Decoding in Diffusion Language Models}



  \icmlsetsymbol{equal}{*}

  \begin{icmlauthorlist}
    \icmlauthor{Michael Hersche}{equal,ibm}
    \icmlauthor{Nicolas Menet}{equal,ibm,ethz}
    \icmlauthor{Ronan Tanios}{equal,ibm,ethz}
    \icmlauthor{Abbas Rahimi}{ibm}
  \end{icmlauthorlist}

  \icmlaffiliation{ibm}{IBM Research - Zurich }
  \icmlaffiliation{ethz}{Department of Computer Science, ETH Z\"urich. Research conducted at IBM Research - Zurich}
    \icmlcorrespondingauthor{Michael Hersche}{michael.hersche@ibm.com}

  \icmlkeywords{Diffusion Language Models, Parallel Decoding}

  \vskip 0.3in
]



\printAffiliationsAndNotice{\icmlEqualContribution}

\begin{abstract}
Diffusion language models (DLMs) have emerged as a promising alternative to autoregressive (AR) models, offering sub-linear generation latency and bidirectional capabilities that are particularly appealing for code generation and editing. Achieving sub-linear latency in discrete DLMs requires predicting multiple tokens in parallel. However, standard DLMs sample tokens independently from conditional \textit{marginal} distributions, failing to capture the joint dependencies among concurrently generated tokens.
As a result, they often lead to syntactic inconsistencies and break multi-token structures. In this work, we introduce CoDiLA (\textbf{Co}herent \textbf{Di}ffusion with \textbf{L}ocal \textbf{A}utoregression), a method that reconciles parallel sampling with local dependency modeling. Rather than forcing the DLM to resolve fine-grained syntax, CoDiLA delegates local decoding to a small, auxiliary AR model operating on the diffusion latents. 
This design allows for parallel generation while ensuring sequential validity within a block and maintaining core DLM capabilities, including bidirectional modeling across blocks. 
We demonstrate that using a highly compact auxiliary AR model (e.g., 0.6B parameters) effectively eliminates coherence artifacts, establishing a new Pareto frontier for accuracy and speed in code generation benchmarks.\footnote{Code available at \url{https://github.com/IBM/coherent-diffusion-local-autoregression}}
\end{abstract}

\begin{figure}[t!]
    \centering
    \includegraphics[width=\linewidth]{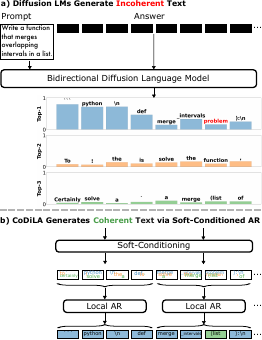}
    \caption{
   Our \name in action. 
    \textbf{a)} An example of incoherent text generated by Dream-Coder-Instruct-7B in the first iteration. Due to independent modeling of marginal distributions, it predicts the incoherent token ``problem'' (Top-1). 
    \textbf{b)} This work enforces local coherence using a block-wise AR model conditioned on soft local tokens. In this example, it recovers coherence by retrieving the correct token ``(list'' from the Top-3 candidates. Displayed prompt was simplified for illustrative purposes. 
        }
    \label{fig:concept}
\end{figure}

\section{Introduction}

Large language models (LLMs) have fundamentally relied on autoregressive (AR) modeling~\cite{vaswani_attention_2017, brown_language_2020}, generating a sequence token-by-token.
While AR allows for highly parallelized training, generation remains sequential, incurring a linear latency cost with respect to the sequence length.
As the demand for fast long-context generation grows, diffusion models have emerged as a compelling alternative.
By formulating generation as a denoising process, diffusion models \textit{jointly refine a set of tokens}. 
Crucially, when the number of denoising steps is smaller than the sequence length, the parallel refinement of diffusion models enables sub-linear generation latency, overcoming the sequential bottleneck of AR.

Originally applied to generative vision applications~\cite{sohl-dickstein_deep_2015, ho_denoising_2020, song_scorebased_2020}, diffusion models are also increasingly applied to natural language generation.  
Besides raw speed~\cite{wang_diffusion_2026, wu_fastdllmv2_2026, ma_dinfer_2025} and data efficiency~\cite{ni_diffusion_2025, ni_training_2025, prabhudesai_diffusion_2025, rutte_scaling_2026}, diffusion language models (DLMs) introduce capabilities distinct from causal AR models, utilizing self-correction~\cite{zhang_corrective_2025, kim_finetuning_2025} and infilling~\cite{zhang_flexiblelength_2025} to solve NP-complete problems in polynomial space via backtracking~\cite{yang_powerful_2026}.
Such characteristics are particularly valuable in structured tasks like code generation, where non-local dependencies (e.g., library imports dictating later function calls) make the ability to backtrack and insert tokens essential.
Consequently, recent DLM scaling efforts have centered on coding domains, demonstrating promise in both open-source~\cite{gong_diffucoder_2026, xie_dreamcoder_2025, song_seed_2025} and commercial~\cite{inception_mercury_2025, deepmind_gemini_2025} initiatives.

To adapt diffusion to the discrete nature of text, state-of-the-art (SoTA) DLMs employ an absorption state~\cite{austin_structured_2021, campbell_continuous_2022, lou_discrete_2024, sahoo_simple_2024, ou_your_2025, shi_simplified_2024}, typically represented by a dedicated \mask token. 
The forward process stochastically masks tokens to this absorption state, while the reverse process employs a bidirectional Transformer to predict the original tokens. 
Masked diffusion models have been successfully scaled to large parameter counts~\cite{gong_scaling_2025, nie_large_2025, ye_dream_2025, xie_dreamcoder_2025, bie_llada20_2025, zhu_lladamoe_2025, tian_nexttoken_2025, cheng_sdar_2025}. 
However, they struggle to predict multiple coherent tokens in parallel and thus cannot yet realize the promise of fast sub-linear generation. 

\paragraph{Key Challenge: Good at Global Drafting, Bad at Local Coherence.}  
While parallel sampling is the key to DLM efficiency, it often leads to incoherence~\cite{liu_discrete_2025, bansal_enabling_2025, sun_why_2025, jin_role_2025, kang_parallelbench_2026, feng_theoretical_2025, zhong_parallelism_2026}. 
In a standard masked diffusion step, the denoising model predicts the conditional \textit{marginal} distribution for each masked token independently, rather than the \textit{joint} distribution across all masked tokens. 
By individually sampling from these univariate marginals, the model effectively assumes independence. 
While this assumption may hold for distant tokens, it breaks down for local structures---such as multi-token words or syntactic code blocks---resulting in incoherent outputs where individual tokens make sense in isolation but fail to form a coherent sequence (e.g., see Figure~\ref{fig:concept}a).

Recent strategies enforce coherence via left-to-right generation of the sequence, either by introducing an AR verification operating on the generated sequence~\cite{israel_accelerating_2025, hu_flashdlm_2026} or by performing self-speculative decoding~\cite{liu_tidar_2025}. 
While enabling accelerated generation, many key capabilities of DLMs, such as infilling, correction, or bidirectional modeling, get lost. 
An alternative approach augments a DLM with an auxiliary single-layer DLM~\cite{bansal_enabling_2025} for iterative unmasking, yet this incurs both notable accuracy degradation, due to limited modeling capacity, and overhead from repeated full-sequence attention and logit computations.

\paragraph{This work: Parallel Sampling via Local Coherence.}
We propose \name (\textbf{Co}herent \textbf{Di}ffusion with \textbf{L}ocal \textbf{A}utoregression), a hybrid generation paradigm that reconciles parallel sampling with local consistency (see Figure~\ref{fig:concept}). 
This work makes the following contributions: 
\begin{itemize}[leftmargin=*, noitemsep, topsep=0pt]
\item We generalize discrete diffusion modeling from individual tokens to \textit{blocks of tokens}. We theoretically prove that modeling the joint likelihood within these blocks strictly improves the achievable NELBO compared to standard conditional token-wise independence. 
\item Since joint modeling of large blocks is computationally intractable using a monolithic DLM, we instead adopt a lightweight auxiliary AR model. This model is \textit{soft-conditioned} on the DLM's predicted probability distributions. 
    Conceptually, the DLM generates a global latent draft, while the AR model executes it locally to ensure syntactic validity.
    Because the AR component is restricted to short, bounded blocks, \name retains the sub-linear latency benefits of diffusion.
\item \rebuttal{With a compact AR model (e.g., Qwen3-0.6B), \name establishes a new accuracy-latency Pareto frontier for code generation using Dream-Coder-Instruct-7B under static (fixed-portion) parallelism. Crucially, \name successfully preserves the base model's accuracy through dynamic (threshold-based) parallelism at 2$\times$ speedup, while still supporting its native bidirectional capabilities in non-causal tasks like code infilling and planning.}
\end{itemize}

\section{Preliminaries}\label{sec:preliminaries}
We consider a discrete sequence $\mathbf{x}_0 = [x_0^1, x_0^2, \dots, x_0^L]$ of length $L$, where each token $x_0^i$ belongs to a vocabulary $\mathcal{V}$ with size $|\mathcal{V}|$. We denote the true data distribution as $q(\mathbf x_0)$. Let us next review the masked diffusion process that drives SoTA DLMs~\cite{austin_structured_2021, campbell_continuous_2022, lou_discrete_2024, sahoo_simple_2024, ou_your_2025, shi_simplified_2024}. We first consider the single-variable case $L=1$, before generalizing to any sequence length $L$.

\subsection{Univariate Discrete Diffusion}\label{sec:univariate_discrete_diffusion_formulation}

\paragraph{Forward Process (Noising).}
The forward process is a Markov chain that progressively corrupts the data by transitioning tokens towards a stationary noise distribution. At any timestep $t \in \{1, \dots, T\}$, the state $x_t$ is derived from $x_{t-1}$ through a transition matrix $\mathbf{Q}_t$ defined as $[\mathbf{Q}_t]_{jk}= q(x_t = k \vert x_{t-1} = j)$, parameterized by a noise schedule $\beta_t$ \cite{austin_structured_2021}. 
%
In masked diffusion, tokens transition to a unique absorbing state \mask $\in \mathcal V$. Once a token is masked, it remains masked (absorbing). Denoting with $\delta$ the Kronecker delta, the transition probability is given by:
\begin{equation*}
    q(x_t | x_{t-1}) = (1 - \beta_t)\delta_{x_t, x_{t-1}} + \beta_t \delta_{x_t, \texttt{[MASK]}}.
\end{equation*}

A key property of the Markovian forward process is the ability to sample $x_t$ directly from $x_0$ without iterating through intermediate steps (``skipping ahead"). By defining the cumulative transition matrix $\bar{\mathbf{Q}}_t = \mathbf{Q}_1 \mathbf{Q}_2 \dots \mathbf{Q}_t$, the marginal distribution at timestep $t$ is given in closed form by $q(x_t = k \vert x_0 = j) = [\bar{\mathbf Q}_t]_{jk}$. This allows for efficient training by sampling arbitrary timesteps $t \sim \mathcal{U}(1, T)$ and computing the corresponding noisy states immediately. Let $\alpha_t := \prod_{s=1}^t 1- \beta_s$ with $\alpha_T = 0$. 
Then we have
\begin{equation*}
    q(x_t \vert x_0) = \alpha_t \delta_{x_t, x_0} + (1-\alpha_t) \delta_{x_t, \texttt{[MASK]}}.
\end{equation*}

\paragraph{Reverse Process (Denoising).}
The parameterized generative process learns to reverse the corruption of the forward process by approximating the intractable true posterior $q({x}_{t-1} \vert {x}_t)$ with $p_\theta(x_{t-1} \vert x_t)$. To simplify the modeling, we parameterize $p_\theta(x_0 \vert x_t)$ with a neural network and propagate to $p_\theta(x_{t-1} \vert x_t)$ via forward process marginalization:
\begin{equation}
    \label{eq:marginalize}
    p_\theta(x_{t-1} \vert x_t) := \mathbb E_{p_\theta(x_0 \vert x_t)} [ q(x_{t-1} | x_t, x_0)].
\end{equation}
This works because $q(x_{t-1} \vert x_t, x_0)$, in contrast to $q(x_{t-1} \vert x_t)$, is fully tractable using Bayes' theorem: $q(x_{t-1} \vert x_t, x_0) = q( x_{t-1} \vert x_0)q( x_t \vert  x_{t-1})/q( x_t \vert  x_0)$. Therefore, to sample from $p_\theta(x_{t-1} \vert x_t)$, we first sample from $p_\theta(x_0 \vert \mathbf{x}_t)$, followed by sampling from $q(x_{t-1} | x_t, x_0)$.

\subsection{Multivariate Discrete Diffusion}
Thus far, we have considered the univariate case $L=1$. Next, we extend Section~\ref{sec:univariate_discrete_diffusion_formulation} to sequences of arbitrary length.

\paragraph{Forward Process (Noising).} We factorize the forward transition $q(\mathbf{x}_t | \mathbf{x}_{t-1})$ over the sequence positions, i.e., $q(\mathbf{x}_t | \mathbf{x}_{t-1}) = \prod_{i=1}^L q(x_t^i | x_{t-1}^i)$. It follows that $q(\mathbf{x}_t | \mathbf{x}_{0})$ and $q(\mathbf{x}_{t-1} | \mathbf{x}_{t}, \mathbf{x}_{0})$ are equally factorized across positions.

\paragraph{Reverse Process (Denoising).} In contrast to the factorizable $q(\mathbf{x}_{t-1} | \mathbf{x}_{t}, \mathbf{x}_{0})$, marginalization over the true data distribution $q(\mathbf x_0 \vert \mathbf x_t)$ breaks the factoring of $q(\mathbf{x}_{t-1} | \mathbf{x}_{t})$ and hence that of the multi-step reverse transition $q(\mathbf{x}_{0} | \mathbf{x}_{t})$. Concurrent DLMs only partially address this: while $x_{0}^i$ usually depends on $x_t^j$ for all $j \in [|\mathcal V|]$, $p_\theta(\mathbf x_0 \vert \mathbf x_t)$ is typically factorized into marginals. This results in a conditional token independence bias \cite{kang_parallelbench_2026, zhong_parallelism_2026}.

\begin{definition}[Conditional Token Independence]\label{def:token_factorization} Consider a parameterized model $p_\theta(\mathbf x_0 \vert \mathbf x_t)$ approximating the reverse diffusion process $q(\mathbf{x}_0 | \mathbf{x}_t)$. Then the model has conditional token independence bias if $p_\theta(\mathbf{x}_0 | \mathbf{x}_t) = \prod_{i=1}^L p_\theta(x_0^i | \mathbf{x}_t)$.
\end{definition}

While such factorization enables highly parallelized sampling, it introduces a coherence validity bottleneck. By sampling from univariate marginals, the model fails to capture local dependencies between tokens generated in the same step. This often results in incoherence, where individual tokens are semantically valid in isolation but fail to form coherent local structures, such as multi-token words or syntactic code blocks. The detrimental effects of incoherence are most evident with masked DLMs. To reach competitive accuracy, the number of unmasked tokens per time step is, in practice, constrained to only a few tokens.

\subsection{Evidence Lower Bound}

We train $p_\theta$ to maximize the Evidence Lower Bound (ELBO) on the data log-likelihood $\mathbb E_{\mathbf x_0 \sim q} \log p_\theta(\mathbf{x}_{0})$. The negative ELBO (NELBO) decomposes into a sum of KL divergence terms between the forward process posterior and the learnable reverse process:
\begin{align*}
    & \mathcal L_{\text{NELBO}}  := \textstyle \mathbb E_{\mathbf x_0 \sim q}[\mathcal L_{\text{NELBO}}^{\mathbf x_0}]
    \geq \mathbb E_{\mathbf x_0 \sim q}[ -\log p_\theta(\mathbf{x}_0)]\\
    & \mathcal{L}_{\text{NELBO}}^{\mathbf x_0} := \textstyle \mathbb{E}_{q(\mathbf{x}_{1:T} \vert \mathbf{x}_0)} \left[ \log \frac{q(\mathbf{x}_{1:T} \vert \mathbf{x}_0)}{p_\theta(\mathbf{x}_{0:T})} \right] = \mathcal{L}_T + \!\sum_{t=1}^{T} \mathcal{L}_{t}\\
    & \mathcal{L}_T := \text{D}_{\text{KL}}(q(\mathbf{x}_T \vert \mathbf{x}_0) \vert\vert p_\theta(\mathbf{x}_T))\\
    & \mathcal{L}_{t}\ := \mathbb E_{q(\mathbf{x}_t \vert \mathbf{x}_0)} [\text{D}_{\text{KL}}(q(\mathbf{x}_{t-1} \vert \mathbf{x}_t, \mathbf{x}_0) \parallel p_\theta(\mathbf{x}_{t-1} \vert \mathbf{x}_t))].
\end{align*}
Note that this decomposition requires $q(\mathbf{x}_{t-1} \vert \mathbf{x}_{t:T}, \mathbf{x}_0) = q(\mathbf{x}_{t-1} \vert \mathbf{x}_{t}, \mathbf{x}_0)$ and $p_\theta(\mathbf{x}_{t-1} \vert \mathbf{x}_{t:T}) = p_\theta(\mathbf{x}_{t-1} \vert \mathbf{x}_t)$. The former is satisfied under the Markovian property, and the latter is a design choice. By setting $p_\theta(\mathbf x_T) = \delta_{\mathbf{x}_T, \texttt{[MASK]}^{|\mathcal V|}}$, $\mathcal L_T =0$, so $p_\theta(\mathbf{x}_{t-1} \vert \mathbf{x}_t)$ is only trained with loss $\sum_{t=1}^L \mathcal L_t$. 

Following \citet{sahoo_simple_2024, shi_simplified_2024, gong_scaling_2025}, the parameterization of $p_\theta(\mathbf{x}_{t-1} \vert \mathbf{x}_t)$ in terms of $p_\theta(\mathbf x_0 \vert \mathbf x_t)$ enables a crucial simplification of the loss function $\mathcal L_t$ to a familiar cross-entropy objective (see Proposition~\ref{prop:cross_entropy_loss} in Appendix~\ref{sec:proofs}):
\begin{align}
    \mathcal{L}_{t} & = \mathbb E_{q(\mathbf{x}_t \vert \mathbf{x}_0)} [\text{D}_{\text{KL}}(q(\mathbf{x}_{t-1} \vert \mathbf{x}_t, \mathbf{x}_0) \vert\vert p_\theta(\mathbf{x}_{t-1} \vert \mathbf{x}_t))]\label{eq:cross_entropy_upper_bound}\\
    & = \textstyle \mathbb E_{q(\mathbf{x}_t \vert \mathbf{x}_0)} [\sum_{i=1}^L   \! -\delta_{x_t^i, \texttt{[MASK]}} \tfrac{\alpha_{t-1} - \alpha_t}{1-\alpha_t} \log p_\theta(x_0^i \vert \mathbf{x}_t)].\nonumber
\end{align}

\begin{figure}[t!]
    \centering
    \includegraphics[width=0.9\linewidth]{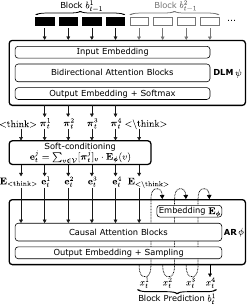}
    \caption{
    \name with a block size of $B=4$. 
    This example depicts the prediction of the first block ($b^1$). 
    First, the DLM computes the token-wise conditional marginal probability vectors ($\boldsymbol{\pi}_t^j$). 
    Next, we perform soft-conditioning by computing the expected embedding ($\mathbf{e}_t^j$) over the AR model's embedding matrix ($\mathbf{E}_\phi$), weighted by these marginals. 
    Finally, the AR model receives these soft tokens, encapsulated by \texttt{<think>} and \texttt{<$\backslash$think>} boundary tokens, to autoregressively decode a locally coherent sequence.}
    \label{fig:architecture}
\end{figure}

\section{Method}\label{sec:method}
This section presents \name (\textbf{Co}herent \textbf{Di}ffusion with \textbf{L}ocal \textbf{A}utoregression), a framework that enables parallel decoding in DLMs by enforcing local coherence (see Figure~\ref{fig:architecture}). 
We begin by partitioning the sequence of tokens into contiguous blocks, where tokens within each block are modeled jointly. 
We formally show that this factorization strictly reduces the irreducible loss (NELBO) compared to token-wise independence.
Since directly modeling the joint within-block distribution becomes intractable with growing block size, we adopt a lightweight local AR model which is soft-conditioned on the DLM's marginal distributions. 
Conceptually, \name operates on \textit{macro-tokens} (blocks) rather than individual tokens; as such, it retains core diffusion benefits such as self-correction and non-causal infilling.

\subsection{Local Coherence Reduces the NELBO}\label{sec:block-theory}
To enable coherent multi-token prediction, we split the sequence $\mathbf x_0 = [b_0^1, b_0^2, \ldots, b_0^{L/B}]$ into blocks of tokens $b_t^i \in \mathcal W := \mathcal V^{B}$. Instead of applying diffusion at the token granularity, we apply it at the block granularity, modeling the joint probability as a factorization of independent blocks: 
\begin{definition}[Conditional Block Independence]\label{def:block_factorization} Consider a parameterized model $p_\theta(\mathbf x_0 \vert \mathbf x_t)$ approximating the reverse diffusion process $q(\mathbf{x}_0 | \mathbf{x}_t)$. Then the model has conditional block independence bias if $p_\theta(\mathbf{x}_0 | \mathbf{x}_t) = \prod_{i=1}^{L/B} p_\theta(b^i_0 | \mathbf{x}_t)$.
\end{definition}

In contrast to the token independence bias, the block independence bias still allows for local coherence within a block. As shown in Theorem~\ref{thm:nelbo_lowest_achievable}, this results in provable improvements on the lowest achievable NELBO.

\begin{theorem}\label{thm:nelbo_lowest_achievable}
    Consider discrete diffusion on random sequences $\mathbf x_0 = [b_0^1, b_0^2, \ldots, b_0^{L/B}]$ where $b_t^i \in \mathcal W$, and a denoising model $p_\theta$ adopting the block independence bias of Definition~\ref{def:block_factorization}. Then the smallest possible NELBO is
    \begin{equation*}
        \textstyle \mathcal B_B := H[\mathbf x_0] + \sum_{t=1}^T\big(\underbrace{\textstyle \sum_{i=1}^{L/B} H[b_{t-1}^i \vert \mathbf x_t] - H[\mathbf x_{t-1} \vert \mathbf x_t]}_{\textit{total correlation across blocks}}\big).
    \end{equation*}
    Further, suppose $b_t^i = [x_t^{(i-1)\cdot B+1}, \ldots, x_t^{i \cdot B}]$ are blocks of tokens $x_t^k \in \mathcal V$. Then, $\mathcal B_1 \geq \mathcal B_B$ with $\mathcal B_1 - \mathcal B_B$ given by
    \begin{equation*} \textstyle \sum_{t=1}^T\sum_{i=1}^{L/B} \big (\underbrace{\textstyle\sum_{j=1}^B H[x_{t-1}^{(i-1)\cdot B + j)} \vert \mathbf x_t] - H[b_{t-1}^i \vert \mathbf x_t]}_{\textit{total correlation within block i}} \big ).
    \end{equation*}
\end{theorem}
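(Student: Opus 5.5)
We would argue pointwise in $t$ and then sum over $t$. With $p_\theta(\mathbf x_T)$ fixed to the all-mask state, $\mathcal L_T=0$. Write the expected NELBO as
\begin{equation*}
\mathcal L_{\text{NELBO}} = \mathbb E_q\big[-\log p_\theta(\mathbf x_T)\big] + \sum_{t=1}^T \mathbb E_q\big[-\log p_\theta(\mathbf x_{t-1}\vert \mathbf x_t)\big] - \mathbb E_q\big[-\log q(\mathbf x_{1:T}\vert \mathbf x_0)\big].
\end{equation*}
The last term is $H[\mathbf x_{1:T}\vert\mathbf x_0] = H[\mathbf x_{0:T}] - H[\mathbf x_0]$. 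By the Markov property, $H[\mathbf x_{0:T}] = H[\mathbf x_T] + \sum_t H[\mathbf x_{t-1}\vert\mathbf x_t]$, and $H[\mathbf x_T]=0$ since $\mathbf x_T$ is deterministic. Each cross-entropy term $\mathbb E_q[-\log p_\theta(\mathbf x_{t-1}\vert\mathbf x_t)]$ is minimized over all admissible reverse kernels. We choose $p_\theta(\mathbf x_{t-1}\vert\mathbf x_t)$ freely for each $t$, since the $t$-th term depends only on that kernel.

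The key modelling step is to show what block independence buys at the level of $p_\theta(\mathbf x_{t-1}\vert\mathbf x_t)$. Since $q(\mathbf x_{t-1}\vert\mathbf x_t,\mathbf x_0)$ factorizes over positions, Eq.~\eqref{eq:marginalize} together with $p_\theta(\mathbf x_0\vert\mathbf x_t)=\prod_i p_\theta(b_0^i\vert\mathbf x_t)$ gives
\begin{equation*}
p_\theta(\mathbf x_{t-1}\vert\mathbf x_t)=\prod_i \mathbb E_{p_\theta(b_0^i\vert\mathbf x_t)}\big[q(b_{t-1}^i\vert b_t^i,b_0^i)\big].
\end{equation*}
This is again block-factorized, $\prod_i p_\theta(b_{t-1}^i\vert\mathbf x_t)$. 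By Gibbs' inequality, the infimum of the cross-entropy over block-factorized kernels is attained at the product of true conditional block marginals $\prod_i q(b_{t-1}^i\vert\mathbf x_t)$, with value $\sum_i H[b_{t-1}^i\vert\mathbf x_t]$. The cross-entropy splits over blocks because $-\log$ of a product is a sum of $-\log$ terms.

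\textbf{Main obstacle: achievability.} We must show that every target of this form is realized by some choice of $p_\theta(b_0^i\vert\mathbf x_t)$. If $b_t^i$ is fully unmasked, the target is a point mass and $q$ already forces it. In general, we would take $p_\theta(b_0^i\vert\mathbf x_t):=q(b_0^i\vert\mathbf x_t)$, the true conditional block marginal. Then
\begin{equation*}
\mathbb E_{q(b_0^i\vert\mathbf x_t)}\big[q(b_{t-1}^i\vert b_t^i,b_0^i)\big]=q(b_{t-1}^i\vert\mathbf x_t).
\end{equation*}
This identity needs $q(b_{t-1}^i\vert\mathbf x_t,\mathbf x_0)=q(b_{t-1}^i\vert b_t^i,b_0^i)$, which holds by the positionwise factorization of the forward posterior, followed by marginalizing out $\mathbf x_0$ conditioned on $\mathbf x_t$. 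So the bound is tight.

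Summing the pieces gives the minimal NELBO
\begin{equation*}
\sum_t\sum_i H[b_{t-1}^i\vert\mathbf x_t]-\sum_t H[\mathbf x_{t-1}\vert\mathbf x_t]+H[\mathbf x_0]=\mathcal B_B .
\end{equation*}

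For the second claim, a token is a block of size $1$, so $\mathcal B_1$ is the same expression with $H[x_{t-1}^k\vert\mathbf x_t]$ summed over all positions $k$. Subtracting, the $H[\mathbf x_0]$ and $H[\mathbf x_{t-1}\vert\mathbf x_t]$ terms cancel. Grouping the token indices $k=(i-1)B+j$ by block yields exactly the stated difference. Each summand is a conditional total correlation, $\sum_j H[x^{(\cdot)}_{t-1}\vert\mathbf x_t]-H[b^i_{t-1}\vert\mathbf x_t]\ge 0$, by subadditivity of conditional entropy. Hence $\mathcal B_1\ge\mathcal B_B$.
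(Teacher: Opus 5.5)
Your proof is correct and follows the same route as the paper. You peel off $H[\mathbf x_0]$, use the reverse-Markov factorization to reduce the NELBO to per-step block cross-entropies minus $H[\mathbf x_{t-1}\vert\mathbf x_t]$, minimize each block term by Gibbs' inequality, and then subtract and regroup for the second claim, with nonnegativity coming from subadditivity of conditional entropy. Your extra step---deriving from Eq.~\eqref{eq:marginalize} that block independence of $p_\theta(\mathbf x_0\vert\mathbf x_t)$ makes $p_\theta(\mathbf x_{t-1}\vert\mathbf x_t)$ block-factorized, and checking that $p_\theta(b_0^i\vert\mathbf x_t)=q(b_0^i\vert\mathbf x_t)$ realizes the optimum $q(b_{t-1}^i\vert\mathbf x_t)$---is something the paper simply assumes when it sets $p_\theta(b_{t-1}^i\vert\mathbf x_t)=q(b_{t-1}^i\vert\mathbf x_t)$, so it tightens rather than changes the argument.
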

Here, $H[\mathbf x] := \mathbb E_{x\sim q}[-\log q(\mathbf x)]$ and its conditional variants denote the entropy under the true distribution $q$. 
The proof is provided in Appendix~\ref{sec:proofs}. 
Variants of this statement were already given by \citet{Huang2022, 
liu_discrete_2025, kang_parallelbench_2026, zhong_parallelism_2026}, but we are the first to cover block sizes $B > 1$ and quantify $\mathcal B_1 - \mathcal B_B$. It follows from Theorem~\ref{thm:nelbo_lowest_achievable} that the smallest possible NELBO, and thus the irreducible modeling error, is minimized by choosing the largest possible block size.
Indeed, as shown in Figure~\ref{fig:training}, our \name empirically achieves a lower loss $\mathcal L_t$ for larger block sizes. 
However, as we demonstrate next, large blocks introduce additional sequential computations.

\subsection{Modeling the Local Joint Probability with AR}
Directly modeling the distribution over the space of macro-tokens $\mathcal W = \mathcal V^B$ is intractable, as its size $|\mathcal V|^B$ grows exponentially with the block size $B$. For instance, with a standard vocabulary $|\mathcal V| \approx 150\ 000$~\cite{yang2025qwen3}, even a small block size implies a prohibitively large readout matrix. 
We circumvent this issue by decomposing the problem: a bidirectional Transformer (DLM) provides global context, and a small AR model refines the local structure (see Figure~\ref{fig:architecture}).
The bidirectional Transformer backbone, parameterized by $\psi$, operates at the token level. Consistent with standard discrete diffusion, it models the block probability as the product of conditional marginals:
\begin{align*}\label{eq:dlm_independent}
    p^{\text{DLM}}_{\psi} (b^i_0 | \mathbf x_{t}) = \prod _{j=1}^{B} p^{\text{DLM}}_{\psi} (x_0^{(i-1)B+j} | \mathbf x_{t}). 
\end{align*}
To capture the dependencies ignored by the DLM backbone, we estimate the joint block probability using a small AR model parameterized by $\phi$. This model is conditioned on the DLM's local marginals. Let $\boldsymbol{\pi}^j_\psi(\mathbf{x}_t) \in \Delta^{|\mathcal{V}|-1}$ denote the marginal distribution predicted by the DLM for the $j$-th token in the block, i.e., $[\boldsymbol{\pi}^j_\psi(\mathbf{x}_t)]_v = p^{\text{DLM}}_{\psi}(x_0^{(i-1)B+j}=v \vert \mathbf{x}_t)$.
Then, the joint block probability is modeled as
\begin{align*}
    p_\theta(b^i_0 | \mathbf x_{t}) &:= p^{\text{AR}}_{\phi}\left(b^i_0 \vert \mathbf{\pi}_\psi(\mathbf{x}_t) \right) \nonumber \\
    &= \prod_{j=1}^B p^{\text{AR}}_{\phi}\left(x_0^{(i-1)B+j} \vert x_0^{(i-1)B + <j}, \mathbf{\pi}_\psi(\mathbf{x}_t) \right).
\end{align*}
\name's total parameter set is $\theta = [\psi, \phi]$. 
Crucially, the AR model is only conditioned on the DLM's representations for the current block $i$ and it only predicts tokens for that block. This strictly limits the effective context length of the AR component to $B$ and ensures that the high latency of AR is only incurred over short, independent segments rather than the full sequence length $L$.

\subsection{Soft-Conditioning as a Sufficient DLM-AR Interface}
The DLM computes the marginal distributions over tokens within a block, and the AR decoder is asked to construct an appropriate joint distribution over these tokens. However, does the AR model really require the entire marginal, or does a top-1 truncated marginal distribution as in ~\cite{hu_flashdlm_2026,israel_accelerating_2025} suffice? Theorem~\ref{thm:soft_vs_hard} confirms that restricting the reconstruction tokens to top-k incurs an irreducible bias that may exclude the most likely token sequence from being generated.
\begin{theorem}
\label{thm:soft_vs_hard}
Let $q$ be the true joint distribution over a block $b = (x^1, \dots, x^B)$ with marginals $\pi = (\pi^1, \dots, \pi^B)$. Let $\mathcal{F}(\pi)$ denote the \textbf{Fréchet class} of $\pi$, defined as the set of all valid joint distributions having marginals $\pi$.

Consider an autoregressive model $p_\phi^{AR}$ attempting to recover $q$ by selecting tokens from the support of $\pi$:
\begin{enumerate}
    \item \textbf{Sufficiency of Soft-Conditioning:} If $p_\phi^{AR}$ is conditioned on the full marginals $\pi$, there exists a parameterization $\phi$ such that $p_\phi^{AR}(\ \cdot\ \vert \pi) = q(\ \cdot \ )$.
    
    \item \textbf{Fréchet Class Restriction:} Let $\pi_{\text{top-k}}$ be the marginals truncated to the $k$ most likely tokens at each position. Conditioning on $\pi_{\text{top-k}}$ restricts the valid solution space to the constrained Fréchet class $\mathcal{F}(\pi_{\text{top-k}})$, strictly limiting the support of any recoverable distribution to the Cartesian product of the top-$k$ sets.
    
    \item \textbf{Exclusion of the Global Mode:} This restriction introduces an irreducible bias. There exist joint distributions $q$ where the global mode $b^* = \arg\max_b q(b)$ is strictly excluded from the support of the restricted class. Formally:
    \begin{equation}
        \exists q \text{ such that } \forall q^\prime \in \mathcal{F}(\pi_{\text{top-k}}), \quad q^\prime(b^*) = 0 < q(b^*).
    \end{equation}
    Thus, high-probability coherent structures can be rendered unrecoverable solely due to marginal truncation.
\end{enumerate}
\end{theorem}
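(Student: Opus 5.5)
Part 1 is an existence claim, so the plan is to build a lookup construction. Because $q$ is an arbitrary joint on $\mathcal V^B$, write it by the chain rule as $q(b)=\prod_{j=1}^B q(x^j\mid x^{<j})$. We treat the AR model as a universal conditional estimator, in the same idealized sense used to define the ``smallest possible NELBO'' in Theorem~\ref{thm:nelbo_lowest_achievable}. Under that idealization there is a $\phi$ whose next-token map at step $j$ outputs $q(x^j\mid x^{<j})$ whenever the conditioning input equals $\pi$. Chaining these conditionals gives $p^{AR}_\phi(\cdot\mid\pi)=q$.

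We also need the recovered distribution to lie in the Fréchet class. Marginalizing the chain-rule product over all coordinates except $j$ gives back $\pi^j$. Hence $q\in\mathcal F(\pi)$, and full soft-conditioning imposes no obstruction.

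For part 2, fix $\pi_{\text{top-k}}$. For each position $j$, it is supported on a set $S_j$ of size $k$. Take any joint $q'$ whose $j$-th marginal is $\pi^j_{\text{top-k}}$. Then
\[
q'(b)\le q'(x^j=b^j)=\pi^j_{\text{top-k}}(b^j)=0 \quad\text{whenever } b^j\notin S_j,
\]
so $\operatorname{supp}q'\subseteq S_1\times\dots\times S_B$. We also read ``selecting tokens from the support'' as a hard constraint on the AR model: its outputs are confined to the $S_j$. Under that reading the same support restriction holds even without exact marginal matching.

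Part 3 is the main step, and the plan is an explicit counterexample with $B=2$ and $k=1$; the construction generalizes to larger $k$. The goal is a joint whose mode is built from tokens that are each only second-ranked in their own marginal. Take vocabulary $\{a,c,d\}$ and set
\[
q(a,c)=0.35,\quad q(c,a)=0.35,\quad q(d,d)=0.3 .
\]
Then both marginals equal $\pi^1=\pi^2=(0.35,0.35,0.3)$ over $(a,c,d)$. This has ties, so perturb slightly to make the top-1 tokens unique. For example, use
\[
q(a,c)=0.34,\quad q(c,a)=0.34,\quad q(d,d)=0.32,
\]
which does not help, since $d$ is still last in each marginal. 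The better approach is to split the competing mass across many sequences: let $q(b^*)=p$ for $b^*=(d,d)$, and spread $1-p$ uniformly over pairs $(a,x)$ and $(x,a)$, with $x$ ranging over many distinct tokens. Choose the sizes so that $a$ has marginal mass exceeding $p$ at each position, while every individual competing sequence has probability less than $p$. Concretely, put mass $0.2$ on $(d,d)$ and $0.4$ on each of the families $(a,x_1),\dots,(a,x_m)$ and $(x_1',a),\dots,(x_m',a)$, each spread uniformly over $m$ sequences with $m\ge 3$. Then $b^*=(d,d)$ is the unique mode, since $0.2>0.4/m$. The top-1 token at each position is $a$, with marginal mass at least $0.4>0.2$. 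It follows that $b^*\notin S_1\times S_2$, and part 2 gives $q'(b^*)=0$ for every $q'\in\mathcal F(\pi_{\text{top-1}})$.

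For general $k$, use $k$ such dominant tokens per position, each carrying its own family of low-probability sequences. The only real care needed is the arithmetic balancing the sequence-level and marginal-level rankings. The key requirement is that $q$ places small probability on many distinct sequences through the top tokens while concentrating probability on the single sequence $b^*$; this is what makes the global mode unrecoverable.
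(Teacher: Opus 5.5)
Your proposal is correct and follows the paper's proof. For part 1, both use an idealized universal-approximation argument: you chain the conditionals $q(x^j\mid x^{<j})$ where the paper invokes Sklar's theorem and a copula, which also sidesteps the non-uniqueness of copulas for discrete marginals. Part 2 uses the same marginal-to-support argument, and part 3 uses an explicit $B=2$, $k=1$ counterexample built on the same mechanism of spreading the competing mass over several sequences that share one marginally dominant token. The paper's example (mass $0.45$ on (Roger, Roger) and $0.55$ split over (Houston, $\cdot$)) needs this only at one position, so you can drop the symmetric second family and the abandoned first attempts. You should also state that all $x_i, x_j'$ differ from $a$, since otherwise $(a,a)$ could receive mass $0.8/m>0.2$ when $m=3$.
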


\begin{remark}
    The sufficiency result in point (1) asserts that for any specific target copula implied by $q$, there exists a valid choice of $\phi$. In practice, $\phi$ is not chosen per instance but is learnt from data and shared across all blocks and contexts. Consequently, the AR model must either implement a fixed coupling or learn to predict the appropriate coupling solely from the input marginals $\pi$.
\end{remark}

Theorem~\ref{thm:soft_vs_hard} shows that conditioning on the full local marginals is necessary and sufficient to reliably retrieve a coherent sequence.
However, directly feeding very high-dimensional probability vectors to an AR model is computationally infeasible and would require training from scratch.
To resolve this, we propose mapping these marginals into a representational space that aligns with the pretrained AR's existing knowledge.
We achieve this via \textit{soft-conditioning}, which projects the distribution onto AR's embedding space.

Formally, given marginals $\boldsymbol{\pi}_t^j$ for the $j$-th token in block $i$, we compute a soft embedding $\mathbf{e}_t^j$ as the expectation of the AR model's token embeddings $\mathbf{E}_\phi$ under this distribution:
\begin{align}
    \mathbf{e}_t^j = \sum_{v \in \mathcal{V}} [\boldsymbol{\pi}_t^j]_v \cdot \mathbf{E}_\phi(v).
\end{align}
To ensure the interface aligns with the pretrained AR model's native representational space, we encapsulate the sequence of soft tokens within special boundary tokens, \texttt{<think>} (beginning of thought) and \texttt{<$\backslash$think>} (end of thought). 
The AR model is thus prompted with the sequence 
\begin{align*}
    [\mathbf{E}_\phi(\texttt{<think>}), \mathbf{e}_t^1, \dots, \mathbf{e}_t^B , \mathbf{E}_\phi(\texttt{<}\backslash\texttt{think>})]
\end{align*}
to autoregressively decode the coherent sequence $\mathbf{b}^i_0$.

\subsection{Training the AR to Retrieve Coherent Blocks}
Although an instruction-tuned AR model might conceivably be configured in-context to perform coherent retrieval, doing so would vastly increase the context length and thus the compute cost during inference. 
Instead, we train the AR model directly to minimize the cross-entropy loss $\mathcal L_t$ of Equation~\eqref{eq:cross_entropy_upper_bound} within the end-to-end \name architecture.
As previously noted, \name functions as a DLM that models block probabilities rather than token probabilities. 
Accordingly, we adjust the training objective to block-wise prediction. In other words, we adapt the forward noising process to mask entire blocks. 
The AR model is then trained based on $\mathcal L_t$ to predict the ground-truth tokens of the corresponding block conditioned on the DLM latents.

\rebuttal{
\subsection{Generation Strategies with \name}

Both the DLM and the auxiliary AR model generate predictive distributions. 
\name leverages these distributions to drive deterministic, confidence-based unmasking schedules. 
We propose three generation modes: static parallelism, dynamic parallelism, and AR-based coherence verification. 
To guide these schedules, we use average conditional entropy as a proxy for uncertainty. For a partial block of size $k \leq B$, the AR model's uncertainty is defined as:
\begin{align*} 
    h^i_t (k) := \frac{1}{k}\sum_{j=1}^{k} H\big[ p_\theta (x_0^{(i-1)B+j} | \mathbf x_t) \big].
\end{align*} 
An analogous entropy-based uncertainty measure can be computed at the DLM level using $p_\psi$.

\paragraph{Static Parallelism (Full-Block Generation).}
We unmask one full block per denoising iteration. 
At each step, we compute the full-block entropy $h_t^i(B)$ for all masked blocks and unmask the block with the lowest average entropy. 
As unrestricted global unmasking can lead to premature end-of-sequence (\texttt{EOS}) predictions, we employ two stabilization strategies. 
The first restricts the unmasking candidate pool to a local scope within 10 blocks relative to the current generation frontier. 
The second operates globally but applies a penalty to the entropy of all blocks where the DLM's top-1 prediction for every token in the block is the \texttt{EOS} token (similar to \citet{ye_dream_2025}). 
The main experiments are conducted with local scope, with ablations in Appendix~\ref{app:candidate-scope}. 
}

\paragraph{Dynamic Parallelism (Partial-Block Generation).}
Dynamic unmasking relaxes the constraint of enforcing a fixed unmasking rate per iteration. 
Instead, the model adjusts the number of unmasked tokens based on its confidence; i.e., we unmask the largest partial block whose entropy is below a specified threshold ($h^i_t (k) \leq \tau$).
The AR model's uncertainty estimation $h^i_t (k)$ is representative for $k>1$ tokens, validating the coherence of the generated text. 
Conversely, when the confidence only allows decoding a single token ($k=1$), we fall back to the DLM's confidence, since the DLM can already guarantee coherence in non-parallel decoding.
Besides confidence, single-token decoding also samples the token from DLM's distribution, following other coherence-enhancing methods~\cite{bansal_enabling_2025}. 
To favor the completion of partially decoded blocks, unmasked tokens are not counted toward the entropy average. 

\rebuttal{
\paragraph{Coherence Verification (AR-based Penalization).}
Finally, we consider a generation mode where the AR model acts as a verifier rather than a direct generator. 
More precisely, within each block, we compare the top-1 of the DLM with the top-1 of the AR model. If they disagree, we reduce the unmasking confidence via fixed penalty.
This approach seamlessly integrates into any confidence-based unmasking schedule and allows tokens to be unmasked in an arbitrary order both within and across blocks.}
\section{Experiments}\label{sec:experiments}

\subsection{Setup}
\label{sec:setup}
We integrate \name into the SoTA instruct-tuned DLM for coding: Dream-Coder-Instruct-7B~\cite{xie_dreamcoder_2025}. 
We finetune the AR model on Ling-Coder-SFT~\cite{codefuse2025samplemattersleveragingmixtureofexperts}, the same SFT dataset that was used for the DLM.
We finetune a separate AR model for each block size for 32k steps, while keeping the DLM frozen. 
Our primary evaluation focuses on standard code generation using HumanEval~\cite{chen_evaluating_2021}, MBPP~\cite{mbpp_2021}, their plus versions (HumanEval+, MBPP+)~\cite{evalplus_2023}, and BigCodeBench (full and hard)~\cite{zhuo2024bigcodebench}. 
Furthermore, to evaluate whether \name preserves the bidirectional, non-causal capabilities inherent to DLMs, we test on HumanEval-Infilling~\cite{wu_dreamon_2025}, synthetic planning tasks (ParallelBench~\cite{kang_parallelbench_2026}, and Graph Traversal~\cite{ye_autoregression_2025a}).
All trainings and evaluations are run on a single NVIDIA A100~80GB GPU using Python~3.10.19, PyTorch~2.7.0, and CUDA~12.8, and \texttt{bf16} precision. 
See Appendix~\ref{app:setup} for more details.

\begin{figure}[b]
    \centering
    \includegraphics[width=1\linewidth]{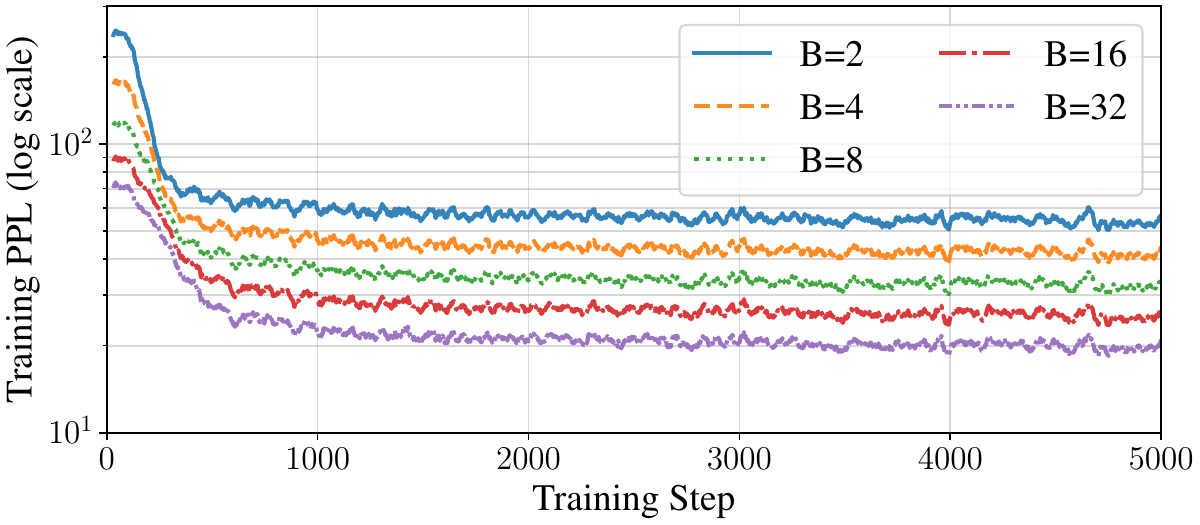}
    \caption{
    \rebuttal{
    \textbf{Larger block sizes ($B$) reduce the training loss.} We compute the average perplexity weighted by the masking ratio (see Equation~\eqref{eq:cross_entropy_upper_bound}), and display the moving average over 10 samples. The forward process always masks blocks of $32$ contiguous tokens.
    }
    }
    \label{fig:training}
\end{figure}

\begin{figure*}[t!]
    \centering
    \includegraphics[width=1.0\textwidth]{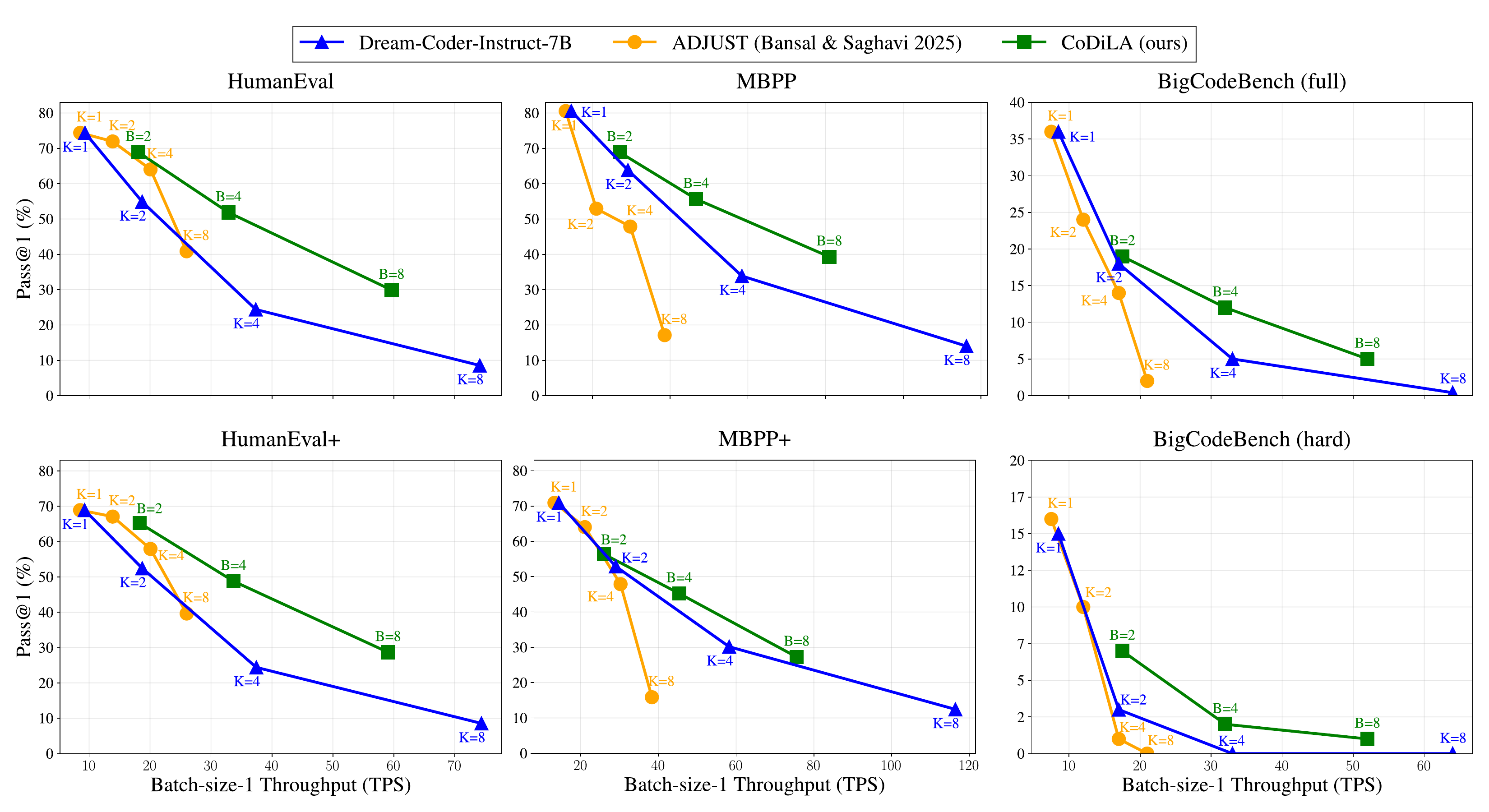}
    \caption{\textbf{Inference with static parallelism.} We report on Pass@1 (\%) vs. Throughput (tokens/sec, batch-size 1) on a single NVIDIA A100-80GB GPU. We compare the base DLM~\cite{xie_dreamcoder_2025}, ADJUST~\cite{bansal_enabling_2025}, and our \name, all built on Dream-Coder-Instruct-7B.
    Parallelism is controlled by unmasking a fixed number of tokens per iteration. \name consistently achieves higher accuracy at equivalent throughput levels.
    }
    \label{fig:acc_throuput_static}
\end{figure*}

\subsection{Larger Block Sizes Reduce the Training Loss}
To validate the theoretical insights from Section~\ref{sec:block-theory}, we analyze the training loss of \name across varying block sizes ($B$). 
\rebuttal{
To ensure a direct comparison of modeling efficiency, we deviate from the standard stochastic masking process and employ a controlled strategy: we consistently mask contiguous segments of 32 tokens.
For configurations with $B < 32$, the 32-token target is decomposed into multiple sub-blocks (e.g., four blocks of $B=8$).
}
As illustrated in Figure~\ref{fig:training}, the training loss rapidly converges for all block sizes. 
Crucially, increasing the block size consistently reduces the loss, empirically validating our theory that larger blocks capture richer local dependencies.
Within this range, we observe no diminishing returns, suggesting that the model continues to benefit from the expanded joint context without saturation.
\rebuttal{
Despite the smaller modeling error, long sequential AR decoding in larger blocks defeat the latency benefits of parallel decoding. 
Hence, we set our focus on $B\leq 8$ in the inference experiments that follow. 
}

\subsection{\rebuttal{Code Generation with Static Parallelism}}
We evaluate \name on downstream coding tasks in a \textit{static parallelism} regime. 
In this setting, the model unmasks a fixed number of tokens per step. For \name with block size $B$, this entails unmasking exactly one block (the one with the highest average confidence) per iteration; consequently, the block size $B$ directly dictates the degree of parallelism. 
We benchmark against two baselines: 
(1) The standard Dream-Coder-Instruct-7B, where we enforce static parallelism by unmasking the top-$K$ tokens with lowest entropy; and 
(2) the ADJUST method~\cite{bansal_enabling_2025}\footnote{The official model weights for ADJUST can be found at huggingface.co/pbansal/Dream-Coder-v0-Instruct-7B-Adjust.} for improved coherence, a direct competitor to CoDiLA. 
As shown in Figure~\ref{fig:acc_throuput_static}, \name establishes a new Pareto optimality front across all benchmarks for Pass@1 vs. throughput with batch size 1 (a measure of latency).

In particular, \name improves accuracy compared to the Dream-Coder baseline, which we attribute to improved local coherence. 
\rebuttal{
We find that this gain does not simply stem from the AR model's standalone coding capabilities. 
Indeed, Qwen3-0.6B alone achieves 35\% (HumanEval), 32\% (HumanEval+), and 19\% (both MBPP and MBPP+). 
\name with $B\leq4$ outperforms Qwen3-0.6B on all benchmarks.}
In contrast, we observe that ADJUST fails to achieve significant speedups, likely because it does not restrict its additional coherence computation within bounded blocks.

\rebuttal{
To attribute the accuracy gains to improved coherence, we consider the reduction of syntax errors. To do so, we measure the rate at which the extraction script of the eval-harness~\cite{eval-harness_2024} cannot extract any code. 
As shown in Table~\ref{tab:syntax-errors}, CoDiLA reduces syntax error rate by up to 56 percentage points (pp).}

\begin{table}[h!]
\caption{\rebuttal{Syntax errors (\%) on HumanEval.}}
\label{tab:syntax-errors}
\centering
\resizebox{\linewidth}{!}{
\begin{tabular}{lrrr}
\toprule
Model & K = B = 2 & K = B = 4 & K = B = 8 \\
\cmidrule(lr){1-1}\cmidrule(lr){2-2}\cmidrule(lr){3-3}\cmidrule(lr){4-4}
Dream-Coder-Instruct-7B & 18 & 38 & 70 \\
CoDiLA & \textbf{4} & \textbf{13} & \textbf{16} \\
\bottomrule
\end{tabular}
}
\end{table}

\subsection{\rebuttal{Code Generation with Dynamic Parallelism}}
While \name achieves substantial speedup with static parallelism, accuracy degradation grows with larger block sizes—an expected trade-off for more parallelism. 
Here, we demonstrate how \textit{dynamic parallelism} mitigates this by introducing partial block unmasking based on a varying threshold $\tau$.
As shown in Figure~\ref{fig:acc_throuput_dynamic}, \name with a block-size of $B=4$ can bridge the gap to sequential sampling thanks to dynamic parallelism while maintaining a speed-up of $>$$2\times$. 
Notably, a larger block-size in dynamic parallelism ($B=4$) achieves better accuracy-throughput behavior than a small block-size ($B=2$) with static sampling. 

\begin{figure}[h!]
    \centering
    \includegraphics[width=1.0\linewidth]{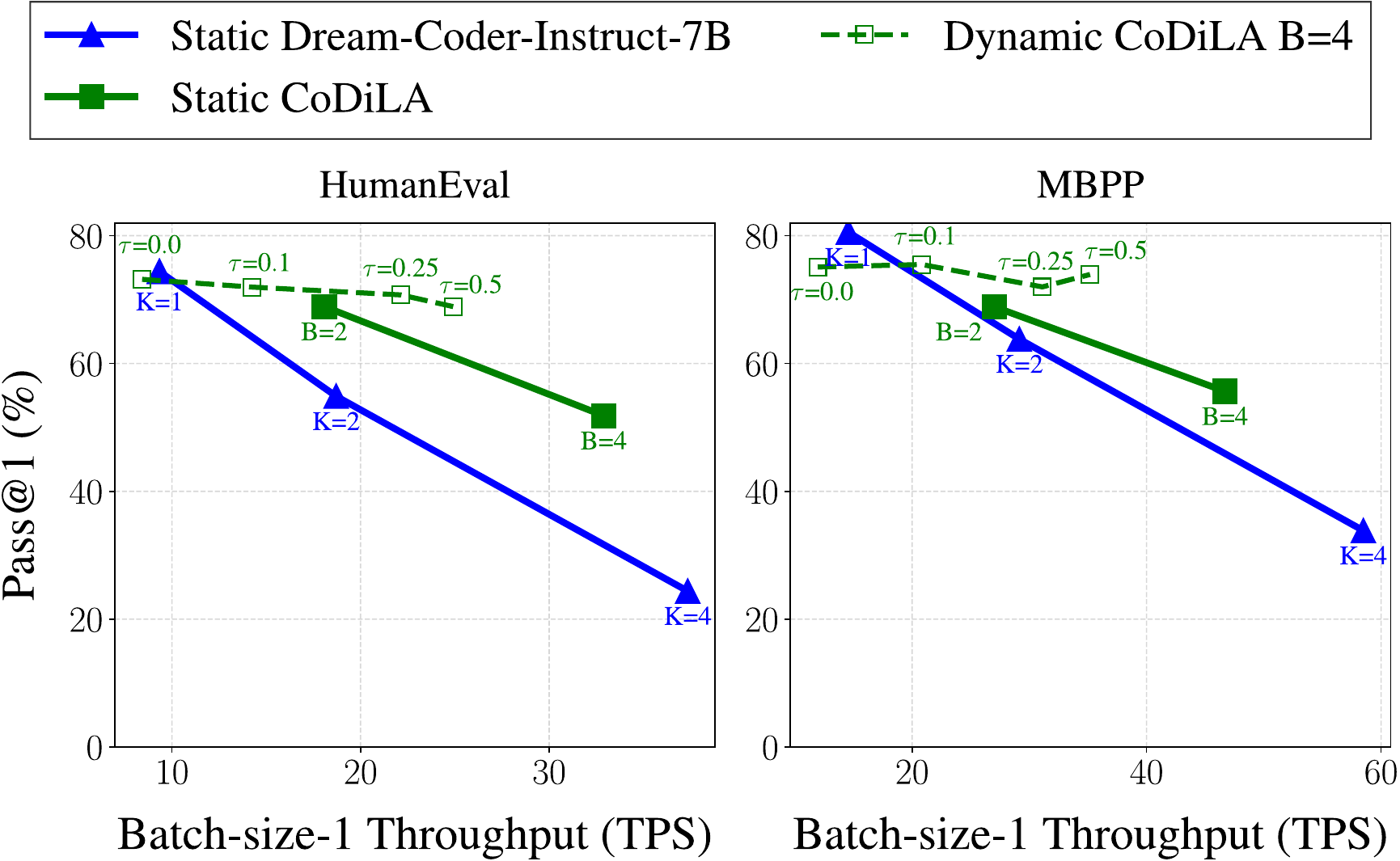}
    \caption{\textbf{Inference with dynamic parallelism.} 
    We operate a dynamic \name ($B=4$) with different entropy thresholds ($\tau$). 
    }
    \label{fig:acc_throuput_dynamic}
\end{figure}

\rebuttal{
\subsection{Preserving Non-Causal Capabilities}
A fundamental advantage of DLMs over AR models is their bidirectional context, which enables non-causal generation such as infilling and complex planning. 
We demonstrate that \name can successfully preserve these global non-causal capabilities while enabling accurate parallel decoding.

\paragraph{Multi-Line Code Infilling.}
We implement \name on top of DreamOn~\cite{wu_dreamon_2025} for the HumanEval-Infilling task. We unmask tokens predicted by the AR model with lowest mixture of entropy taken from AR and DLM. The results are shown in \cref{tab:infilling}. 
We successfully accelerate generation (1.3 to 1.5 tokens/step) while maintaining the high accuracy of the sequential DLM baseline ($K=1$), which notably outperforms pure AR models of similar scale.

\begin{table}[h!]
\centering
\caption{\rebuttal{Multi-line code infilling on HumanEval-Infilling. $^\dagger$Baseline results are taken from~\citet{wu_dreamon_2025}.}}
\label{tab:infilling}
\resizebox{\linewidth}{!}{
\begin{tabular}{lcc}
\toprule
& P@1 (\%) & Tokens/step \\
\cmidrule(lr){1-1}\cmidrule(lr){2-3}
Deepseek-Coder-6.7B$^\dagger$ & 45.7 & 1 \\
Seed-Coder-8B$^\dagger$ & 59.3 & 1 \\
Qwen2.5-Coder-7B$^\dagger$ & 58.7 & 1 \\
\cmidrule(lr){1-1}\cmidrule(lr){2-3}
DreamOn ($K=1$) & 62.5 & 1 \\
DreamOn ($K=2$)  & 53.1 & 2 \\
DreamOn+\name ($\tau=0.2$) & 62.5 & 1.3 \\
DreamOn+\name ($\tau=0.5$) & 61.5 & 1.5 \\
\bottomrule
\end{tabular}
}
\end{table}

\paragraph{ParallelBench}
We further test CoDiLA on the waiting line tasks of ParallelBench~\cite{kang_parallelbench_2026}. 
As shown in Appendix~\ref{app:parallelbench}, using the AR model as a verifier to penalize locally incoherent candidates during parallel sampling improves accuracy, particularly at highly parallel regimes.

\paragraph{Graph Traversal}
We evaluate planning capabilities on the graph traversal task from~\citet{ye_autoregression_2025a}, shown in Appendix~\ref{app:planning}
This experiment confirms that the auxiliary AR can be effectively trained from scratch for tasks with specialized tokenizers. 
As in ParallelBench, \name improves performance across highly parallel decoding regimes when using the AR model to penalize incoherent predictions. 
}

\subsection{Ablation Study}

\rebuttal{
\paragraph{Soft vs. Top-K Conditioning}
We ablate the effectiveness of our soft-conditioning by comparing against Top-K-conditioning. 
We train variants of the AR model that conditions only on the top-K token predicted by the DLM, rather than the soft embedding used in \name. 
As shown in Appendix~\ref{app:conditioning-strategy}, reducing the information content of the interface results in a drop in accuracy, confirming Theorem~\ref{thm:soft_vs_hard}, which asserts that the AR model needs the rich signal provided by the soft tokens.
Furthermore, utilizing boundary tokens (\texttt{<think>} and \texttt{<$\backslash$think>}) to encapsulate these soft embeddings is crucial; removing them increases the converged NELBO from 13.6 to 15.5 for $B=4$.
}

\rebuttal{
\paragraph{AR Model Size}
Scaling ablations in Appendix~\ref{app:ar-modelsize} show that larger AR models with 1.7B or 4B parameters provide no consistent accuracy gain. 
Hence, CoDiLA's performance does not rely on a strong AR backbone. 
}

\rebuttal{
\paragraph{Candidate Scope}
While the main results are based on the local scope, our ablations in Appendix~\ref{app:candidate-scope} confirm that the global scope with \texttt{EOS} penalty works equally well.
Moreover, extending the local horizon to the full sequence length has a negligible impact on throughput (e.g., $<$15\% when increasing from 10 to 50 blocks).\looseness -1}

\rebuttal{
\paragraph{Generation of Multiple Blocks}
Appendix~\ref{app:multiple-blocks} shows that at equivalent parallelism, generating a single larger block outperforms multiple smaller blocks (e.g., one size-8 block beats two size-4 blocks by 8 pp). 
This confirms that the local coherence achieved by our block structure is an essential ingredient for retaining performance in parallel decoding.
}

\rebuttal{
\paragraph{Generation Order}
To ensure our block-wise AR decoding does not impose a strict left-to-right bias on the global sequence, we analyzed \name's generation order. 
We computed the Spearman correlation between the AR generation and \name's sampling order on MBPP. 
As detailed in Appendix~\ref{app:any-order}, the correlation decreases as block size increases, indicating that delegating local dependencies to the AR model preserves the DLM's any-order advantages.
}

\rebuttal{
\paragraph{Throughput at Larger Batch Sizes}
While the AR model introduces a minor computational overhead at batch size 1, this cost is fully amortized at a batch size of 8, as shown in Appendix~\ref{app:batch-size}. 
Our central results emphasize on batch size 1 to more directly measure the latency potential of DLMs. 
}

\section{Related Work}\label{sec:related}

\paragraph{Parallel Sampling Strategies.}
Dream~\cite{ye_dream_2025} relies on entropy-based heuristics to unmask a fixed ratio of tokens per step. 
To balance speed and accuracy, recent methods propose switching between exploratory (remasking) and accelerated decoding stages~\cite{wei_accelerating_2026, wang_remasking_2025, meshchaninov_guided_2025}. 
\citet{ma_dinfer_2025} introduce a hierarchical decoding strategy that divides the sequence into blocks in a divide-and-conquer fashion. 
Others aim to optimize the trajectory itself by learning the unmasking schedule~\cite{bao_learning_2026}, exploiting local confidence clusters~\cite{kong_accelerating_2025}, distilling the model via score trajectory matching~\cite{fu_learnable_2025}, or applying certainty-distillation~\cite{chen_dparallel_2026}. 
However, while these strategies enhance performance, they do not fundamentally resolve the conditional independence assumption inherent in parallel sampling. 
By operating on blocks, \name addresses this limitation locally and remains agnostic to the choice of scheduling or distillation strategy.

\paragraph{Unordered Global Coherence Enforcement.} 
\citet{campbell_selfspeculative_2026} and \citet{wu_free_2025} propose self-speculative decoding to validate unmasked solutions in parallel, though at the cost of lower batched throughput.
ADJUST~\cite{bansal_enabling_2025} augments a base DLM with a single-layer DLM verifier, which iteratively unmasks one token at a time, conditioned on the already decoded tokens as well as the latents of masked tokens. 
However, this auxiliary verifier requires training from scratch and incurs repeated global attention costs and logit computations over the full sequence.
Indeed, our experiments confirm that this approach results in computational overhead and lower accuracy (see Figure~\ref{fig:acc_throuput_static}).
In contrast, \name leverages a pretrained auxiliary AR model restricted to local blocks. 
Our soft-conditioning avoids the overhead of full-sequence attention and eliminates the need for extensive pretraining, enabling more expressive verification with minimal cost.

\paragraph{Left-to-Right Coherence Enforcement.} 
APD~\cite{israel_accelerating_2025} and FlashDLLM~\cite{hu_flashdlm_2026} employ an AR model to verify the DLM's output based on the entire generated sequence history. 
Similarly, TiDAR~\cite{liu_tidar_2025} performs self-speculative decoding in a left-to-right order.
These methods force the DLM into a quasi-AR mode that effectively strips it of unique non-causal capabilities. 
Discrete copula diffusion~\cite{liu_discrete_2025} combines DLM marginals with AR distributions but incurs high computational costs due to multi-pass operations over large contexts. 
Beyond DLMs, any-subset AR models have been utilized to accelerate sampling via any-subset speculative decoding~\cite{guo_selfspeculative_2025}.
However, despite the speed gains, the any-subset constraint still imposes a specialized left-to-right generation order.
\name retains the DLM's global, non-autoregressive flexibility, avoiding the limitations of strict left-to-right generation.

\paragraph{Efficiency Scaling and Block-Based Diffusion.} 
Semi-AR block diffusion~\cite{arriola_block_2024, nie_large_2025, arriola_encoderdecoder_2025} decodes the sequence block-by-block, allowing for KV caching from previously generated segments~\cite{liu_dllmcache_2025, wu_fastdllm_2026}. 
SoTA DLM frameworks, such as Fast-dLLM~\cite{wu_fastdllm_2026}, Fast-dLLM2~\cite{wu_fastdllmv2_2026}, D2F~\cite{wang_diffusion_2026}, NBDiff~\cite{tian_nexttoken_2025}, Efficient-DLLM~\cite{fu_efficientdlm_2025}, and dInfer~\cite{ma_dinfer_2025}, achieve competitive speeds by combining caching strategies with optimized attention. 
However, they typically define blocks in a fixed left-to-right order to maximize KV reuse, which again compromises the model's ability to perform arbitrary-order generation or infilling. 
Nevertheless, \name is compatible with these approaches, effectively introducing a second level of blocks to ensure local coherence while maintaining global efficiency.

\section{Conclusion}
We introduce \name, a hybrid framework that reconciles the parallelism of discrete DLMs with the local coherence of AR. 
While our theoretical analysis proves that block-based factorization strictly diminishes the irreducible modeling error compared to token-based factorization, the key enabler for practically achieving this benefit is our soft-conditioning interface to an AR model. 
By projecting DLM's conditional marginals into a semantic space, we create a high-capacity channel that allows a lightweight AR model to accurately retrieve coherent tokens where DLM-only inference would fail. 
This design effectively bridges global non-AR drafting with locally coherent execution, establishing a new Pareto front for accuracy and throughput in coding benchmarks. 
Future work could explore extending CoDiLA to dynamic, semantically grounded block lengths~\cite{zhang_swordsman_2026}.

\section*{Impact Statement}
This paper presents methodological advancements in the field of machine learning, specifically targeting the inference efficiency and accuracy of diffusion language models. As this work focuses on fundamental algorithmic improvements for parallel sampling and local coherence, we do not foresee any specific negative societal consequences or malicious use cases that would arise directly from our contributions. The proposed techniques are general-purpose in nature and do not introduce new capabilities that would inherently facilitate harmful applications beyond the general risks already associated with large language models.

\bibliography{references, references_misc}
\bibliographystyle{icml2026}

\newpage
\appendix
\onecolumn


\section{Experimental Setting}\label{app:setup}

\subsection{AR Finetuning}
\label{app:finetuning}

\paragraph{Training Dataset.}
We utilize the \texttt{inclusionAI/Ling-Coder-SFT} dataset~\cite{codefuse2025samplemattersleveragingmixtureofexperts}, consisting of 4.48\,M instruction-response pairs, which is the same dataset specified by~\citet{xie_dreamcoder_2025} for the SFT of the Dream-Coder-7B model. We hold out a random 1\% subset for validation purposes. From this filtered pool, we sample up to $300\,000$ training examples and $500$ validation examples for our experimental runs.

\paragraph{Baseline Protocol.}
Our finetuning setup is inspired by the SFT of the Dream~\cite{ye_dream_2025, xie_dreamcoder_2025} and soft-masking~\cite{hersche_softmasked_2026}. 
Concretely, we employ AdamW with a cosine LR schedule, a warmup ratio of $0.03$, and a max gradient norm of $7.0$. We use a batch size of $1$ with $8\times$ gradient accumulation (effective batch size $8$). Validation is performed every $100$ steps, with the best checkpoint selected by validation loss. All tokenization, chat templating, absorption \mask, and EOS/PAD handling are managed through the Dream-Coder tokenizer.

\paragraph{What changes in our setup (high-level).}
We depart from the standard DLM training in three targeted ways; we detail each change in \S\ref{app:change1}--\ref{app:change3}.
\begin{enumerate}
  \item \textbf{Training roles.} The auxiliary \textbf{AR model is Qwen3-0.6B} (finetuned end-to-end), while the \textbf{DLM model (Dream-Coder-Instruct-7B) is kept frozen}.
  \item \textbf{Tokenizer/interface alignment.} We use the \textbf{Dream-Coder tokenizer} for templating and masking, and perform \emph{soft-conditioning} by multiplying the diffusion marginals with the AR embedding matrix; \emph{we do not remap token IDs}.
  \item \textbf{Masking granularity.} We replace token-wise masking with \textbf{non-overlapping block-level masking} over the response with fixed block size $B \in \{2,4,8\}$.
\end{enumerate}

\paragraph{Hardware and environment.}
All experiments are run on a single NVIDIA A100~80GB GPU using Python~3.10.19, PyTorch~2.7.0, and CUDA~12.8. We enable \texttt{bf16} and gradient checkpointing.

\vspace{0.5em}

\subsubsection{Training Roles: AR Trained, Diffusion Frozen}
\label{app:change1}
We finetune the AR model fully end-to-end and keep the DLM backbone completely frozen.
Concretely:
\begin{itemize}
  \item \textbf{Diffusion (frozen):} Dream-Coder-Instruct-7B (\texttt{Dream-org/Dream-Coder-v0-Instruct-7B});
  \item \textbf{AR (trained):} Qwen3-0.6B (\texttt{Qwen/Qwen3-0.6B}), optimized with AdamW, LR $1\!\times\!10^{-5}$, cosine decay, warmup ratio $0.03$, max grad norm $7.0$, and weight decay $0.0$.
\end{itemize}

\subsubsection{Tokenizer and Soft-Conditioning Interface}
\label{app:change2}
We use the Dream-Coder tokenizer for all templating, masking, \mask{}, EOS, and PAD handling. At masked positions, the diffusion model outputs a categorical distribution over Dream tokens; we compute the expected AR input embedding by a soft mixture over the AR embedding matrix (soft-conditioning). Importantly, we perform \emph{no token-ID remapping}. For the model snapshots we use, token IDs \texttt{0..151664} are identical between Dream-Coder and Qwen3. Only two Dream/Qwen sentinel IDs differ:
\begin{itemize}
  \item \textbf{ID 151665}: Dream \texttt{<|beginoftext|>} vs.~Qwen \texttt{<tool\_response>},
  \item \textbf{ID 151666}: Dream \texttt{<|mask|>} vs.~Qwen \texttt{</tool\_response>}.
\end{itemize}
These mismatches are benign: Dream never predicts \texttt{<|mask|>} as a target token; and if \texttt{<|beginoftext|>} is predicted under masking, the AR side will interpret it as \texttt{<tool\_response>}, which the model learns to handle. Finally, Qwen defines two additional special tokens, \texttt{<think>} and \texttt{</think>}, which we use \emph{only} as boundary markers delimiting the soft-conditioning segment passed to the AR model. These tokens are never produced by the diffusion model, never appear in the masked loss, and therefore do not affect the DLM-AR interface.\\

\subsubsection{Block-Level Masking}
\label{app:change3}
We replace token-wise masking with block-level masking over the response. The response is partitioned into non-overlapping contiguous blocks of fixed size $B \in \{2,4,8\}$, which serve as the atomic units of the forward noising process. For each batch, we sample $t \sim \mathrm{Uniform}(0.2,0.8)$ and set $p_{\text{mask}} = (1-\varepsilon)\,t + \varepsilon$ with $\varepsilon=10^{-3}$. Blocks are masked independently; if none is selected, we force-mask one block uniformly at random. The loss is computed only at masked positions.

\subsection{Evaluation} 
\label{app:evaluation}

\paragraph{Benchmarks and Metrics.}
We evaluate \name in an instruction-following setting across three primary categories of code generation tasks:
\begin{itemize}
    \item \textbf{HumanEval \& MBPP}: Functional correctness is assessed on the 164 Python problems of HumanEval \cite{chen_evaluating_2021} and the sanitized version of the MBPP dataset~\citep{mbpp_2021}.
    \item \textbf{EvalPlus (HE+ \& MBPP+)}: To ensure robustness against weak unit tests, we utilize the augmented test suites from \citet{evalplus_2023}. For MBPP+, we execute the complete testing pipeline to maximize verification depth.
    \item \textbf{BigCodeBench}: We evaluated on both \emph{Full} and \emph{Hard} splits using the official v0.2.0 protocol, focusing on complex library interactions~\cite{zhuo2024bigcodebench}.
\end{itemize}
We utilize \texttt{lm-evaluation-harness} (v0.4.8) for HumanEval/MBPP and the \texttt{bigcodebench} (v0.2.0) framework.

\paragraph{Improved Extraction Logic.} 
For the MBPP benchmarks, we implement a revised extraction protocol within the \texttt{lm-evaluation-harness}~\cite{eval-harness_2024} to address known failures in the default regex-based parsing. Specifically, our pipeline ensures robust code extraction by identifying and isolating content strictly within Markdown code blocks (delimited by \texttt{```python} and \texttt{```}). This ensures that functional correctness is measured only on the generated implementation, mitigating the impact of surrounding conversational noise or formatting artifacts that previously led to false-negative execution failures.

\paragraph{Decoding Mechanism.}

\begin{enumerate}
    \item \textbf{Interface \& Latent Projection}: Diffusion output probabilities over the Dream-Coder vocabulary are projected into the AR embedding space by computing the expected embedding relative to the AR embedding matrix. To ensure a clear termination signal, we apply a discretization step at the sequence boundary: if the diffusion model predicts the \texttt{EOS} token, we replace the soft mixture with the discrete (hard) embedding of the \texttt{EOS} token. These latents are delimited by Qwen-style \texttt{<think>} and \texttt{</think>} tags and injected into the AR model via the \texttt{vLLM} \texttt{EmbedsPrompt} interface.
    \item \textbf{Decoding Scope}: At each denoising iteration, the model evaluates a scope of up to 10 masked blocks. For each candidate block of size $B \in \{2, 4, 8\}$, the AR model predicts its tokens. As shown in Table~\ref{tab:lookahead}, increasing the scope results in a drop in accuracy due to premature \texttt{EOS} prediction. Yet, the throughput is maintained within 15\%. 
    \item \textbf{Lowest-Entropy Unmasking}: We calculate the average per-token entropy provided by the AR executor for each candidate block. For static parallelism, the algorithm greedily unmasks the single block with the lowest average entropy, indicating the highest-confidence prediction. This yields $B$ tokens per iteration.
\end{enumerate}

\paragraph{Implementation Details.}
All experiments are conducted in a zero-shot, single-sample ($n=1$) setting with a fixed seed. We disable Chain-of-Thought (CoT) and external tools to isolate the model's intrinsic generation capabilities. The AR executor is configured with a temperature of 0.1 and top-$p$ of 0.8.

\paragraph{Implementation.}
We utilize \texttt{vLLM} to manage the AR model's inference. We set \texttt{gpu\_memory\_utilization=0.2} to ensure that both models can reside concurrently in a single device's memory. We report generation latency using CUDA-synchronized wall time, ensuring an accurate performance profile for both the diffusion and autoregressive components.

\begin{table}[h]
\centering
\small
\begin{tabular}{l c c c}
\toprule
\textbf{Configuration} & \textbf{HumanEval/+} & \textbf{MBPP/+} & \textbf{BigCodeBench} \\
\midrule
Max Generation Length & 768 & 512 & 1024 \\
Temperature & 0.1 & 0.1 & 0.1 \\
Top-$p$ & 0.8 & 0.8 & 0.8 \\
Block Size $B$ & $\{2, 4, 8\}$ & $\{2, 4, 8\}$ & $\{2, 4, 8\}$ \\
\bottomrule
\end{tabular}
\caption{Hyperparameter configurations across different benchmarks.}
\label{tab:hyperparams}
\end{table}

\newpage

\section{Ablations}\label{app:results}

\subsection{Conditioning Strategy}\label{app:conditioning-strategy}

We ablate the interface between the DLM and the auxiliary AR model, comparing our full soft-conditioning against Top-$K$ truncation, and evaluating the impact of explicit boundary tokens.

\paragraph{Soft-Conditioning vs. Top-$K$}
As shown in \cref{tab:soft_conditioning}, truncating the DLM's marginal distribution degrades code generation accuracy. 
Furthermore, computing the full expected embedding incurs negligible throughput overhead, making full soft-conditioning strictly optimal on the Pareto frontier.

\paragraph{The Role of Boundary Tokens}
In our default configuration, the soft embeddings are encapsulated by explicit boundary tokens (\texttt{<think>} and \texttt{<$\backslash$think>}) to align them with the AR model's discrete representational space. 
To quantify the importance of this design choice, we train a \name variant ($B=4$) without these boundaries. 
The variant lacking boundary tokens converged to a NELBO of 15.5. 
In contrast, the default configuration utilizing boundary tokens achieves a much better NELBO of 13.6, confirming that explicit encapsulation is beneficial for minimizing irreducible modeling error.

\begin{table}[h!]
\centering
\caption{
\textbf{Impact of conditioning strategy.} Comparison of Pass@1 (\%) scores and throughput (TPS) between baseline Dream-Coder-Instruct-7B and \name with both soft- and Top-K-conditioning.
}
\label{tab:soft_conditioning}
\begin{tabular}{lrrrrrrrr}
\toprule
 & \multicolumn{2}{c}{\textbf{K=1}}& \multicolumn{2}{c}{\textbf{K=B=2}} & \multicolumn{2}{c}{\textbf{K=B=4}} & \multicolumn{2}{c}{\textbf{K=B=8}} \\
\cmidrule(lr){2-3} \cmidrule(lr){4-5} \cmidrule(lr){6-7}\cmidrule(lr){8-9} 
& P@1 & TPS & P@1 & TPS & P@1 & TPS & P@1 & TPS  \\
\cmidrule(lr){1-1}\cmidrule(lr){2-3} \cmidrule(lr){4-5} \cmidrule(lr){6-7}\cmidrule(lr){8-9}  
\textbf{HumanEval} \\
Dream-Coder & 74.4 & 9 & 54.9 & 19 & 24.4 & 37 & 5.5 & 74\\
\name (Softmax) & & &   68.9   &  18   &   51.8   &  33  &   29.9   &  60   \\
\name (Top-5) &  & &  63.4 & 18 & 56.1 & 33 & 28.0 & 58    \\
\name (Top-3) & & &   61.0 & 18 & 50.0 & 31 & 31.7 & 59  \\
\name (Top-1)    & & &   55.5   &  22   &   36.6   &  33   &   18.3   &  66   \\
\cmidrule(lr){1-1}\cmidrule(lr){2-3} \cmidrule(lr){4-5} \cmidrule(lr){6-7}\cmidrule(lr){8-9} 
\textbf{MBPP} \\
Dream-Coder & 80.5 & 15 & 63.8 & 29 & 33.9 & 58 & 14.0 & 116 \\
\name (Softmax) & & &   69.0   &  27  &   55.6   &  47  &   40.0   &  80   \\
\name (Top-5) &  & & 62.6    & 26     &  51.4    & 47  & 33.0     & 82     \\
\name (Top-3) & & & 64.2    & 27  & 50.2     & 49    &  35.8    &   83  \\
\name (Top-1) & &    &   49.4   &  26  &   33.5   &  46 &   21.7   &   73 \\
\bottomrule
\end{tabular}
\end{table}

\subsection{AR Model Size}\label{app:ar-modelsize}
We ablate the size of the AR model across HumanEval, MBPP, HumanEval+, and MBPP+. 
Specifically, we replace our default Qwen3-0.6B model with larger variants (1.7B and 4B parameters) to observe the impact on both generation accuracy and inference throughput.

As illustrated in \cref{fig:ar-model-size}, scaling up the parameter count of the AR model does not yield any consistent or significant improvements in accuracy across any of the evaluated benchmarks. 
However, deploying larger AR models introduces a computational overhead. 
This slowdown becomes particularly pronounced when decoding larger blocks (e.g., $B=8$), as the latency penalty of the sequential autoregressive decoding steps begins to dominate the generation time.

These results confirm that \name's improved accuracy under parallel decoding does not originate from an overly large or highly capable AR backbone. 
Instead, the AR model acts as a lightweight, parameterized $n$-gram readout strictly responsible for enforcing local syntactic coherence. 

\begin{figure}[h!]
    \centering
    \includegraphics[width=0.8\linewidth]{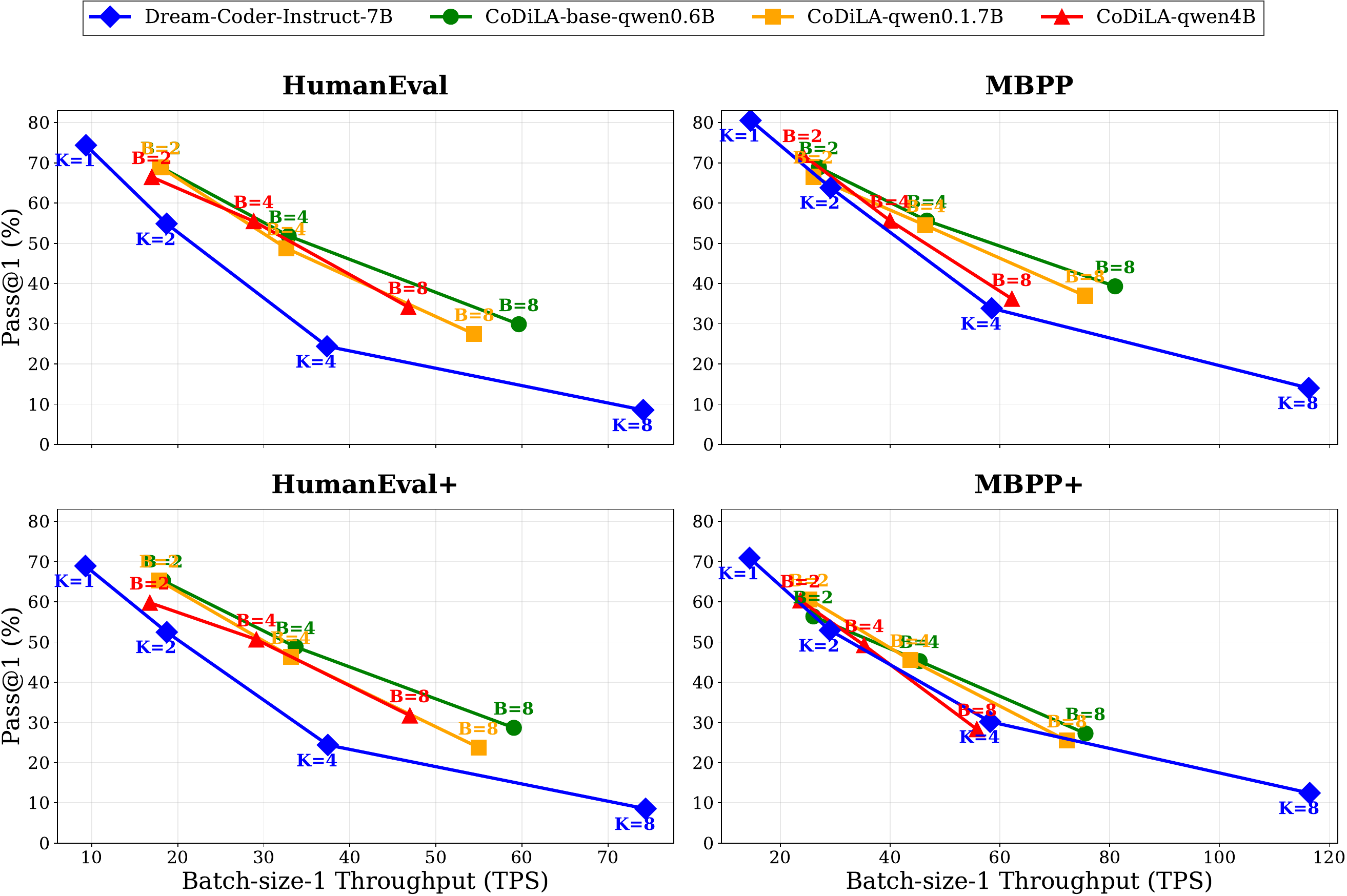}
    \caption{Ablation of the AR model size. \name{} does not require large AR models.}
    \label{fig:ar-model-size}
\end{figure}

\subsection{Candidate Scope}\label{app:candidate-scope}

\paragraph{Impact of the Local Horizon}
Next, we ablate the size of the local candidate scope. 
As shown in \cref{tab:lookahead}, extending the lookahead horizon from 10 to 50 blocks incurs a negligible throughput penalty (less than a 15\% drop in TPS). 
The observed accuracy degradation at larger unpenalized horizons can be attributed to premature \texttt{EOS} decoding. 

\begin{table}[h!]
\centering
\caption{Increasing the lookahead window for block decoding selection has a negligible impact on the throughput (TPS), but impacts accuracy due to premature \texttt{EOS} decoding. We report \name with B=4 using static parallelism on HumanEval. }
\label{tab:lookahead}
\begin{tabular}{lcc}
\toprule
\textbf{Candidate Scope (Blocks)} & \textbf{Pass@1} & \textbf{TPS} \\
\cmidrule(lr){1-1}\cmidrule(lr){2-2}\cmidrule(lr){3-3}
10 (default)            & 51.8\%          &  34.4   \\ 
20                      & 49.3\%          &  32.7  \\
30                      & 45.1\%          &  32.1   \\
50                      & 42.1\%          &  29.5  \\
\bottomrule
\end{tabular}
\end{table}

\paragraph{Local vs. Global Strategy}
The default local scope strategy limits the candidate pool to the nearest 10 blocks relative to the current sequential generation frontier. 
Here, we additionally evaluate a global scope that imposes a large penalty on any block whose DLM top-1 predictions are \texttt{EOS} at all positions, thereby excluding fully uninformative blocks from the global selection objective. 
This prevents degenerate solutions in which empty blocks compete with content-bearing blocks during unmasking. The only exception is the leftmost candidate block, where an all-\texttt{EOS} prediction is permitted when it achieves the lowest entropy, corresponding to a termination-like decision for the sequence.
As illustrated in \cref{fig:candidate-scope}, both strategies successfully prevent premature \texttt{EOS} decoding and achieve comparable Pass@1 accuracy across all evaluated benchmarks.

\begin{figure}[htb]
    \centering
    \includegraphics[width=0.8\linewidth]{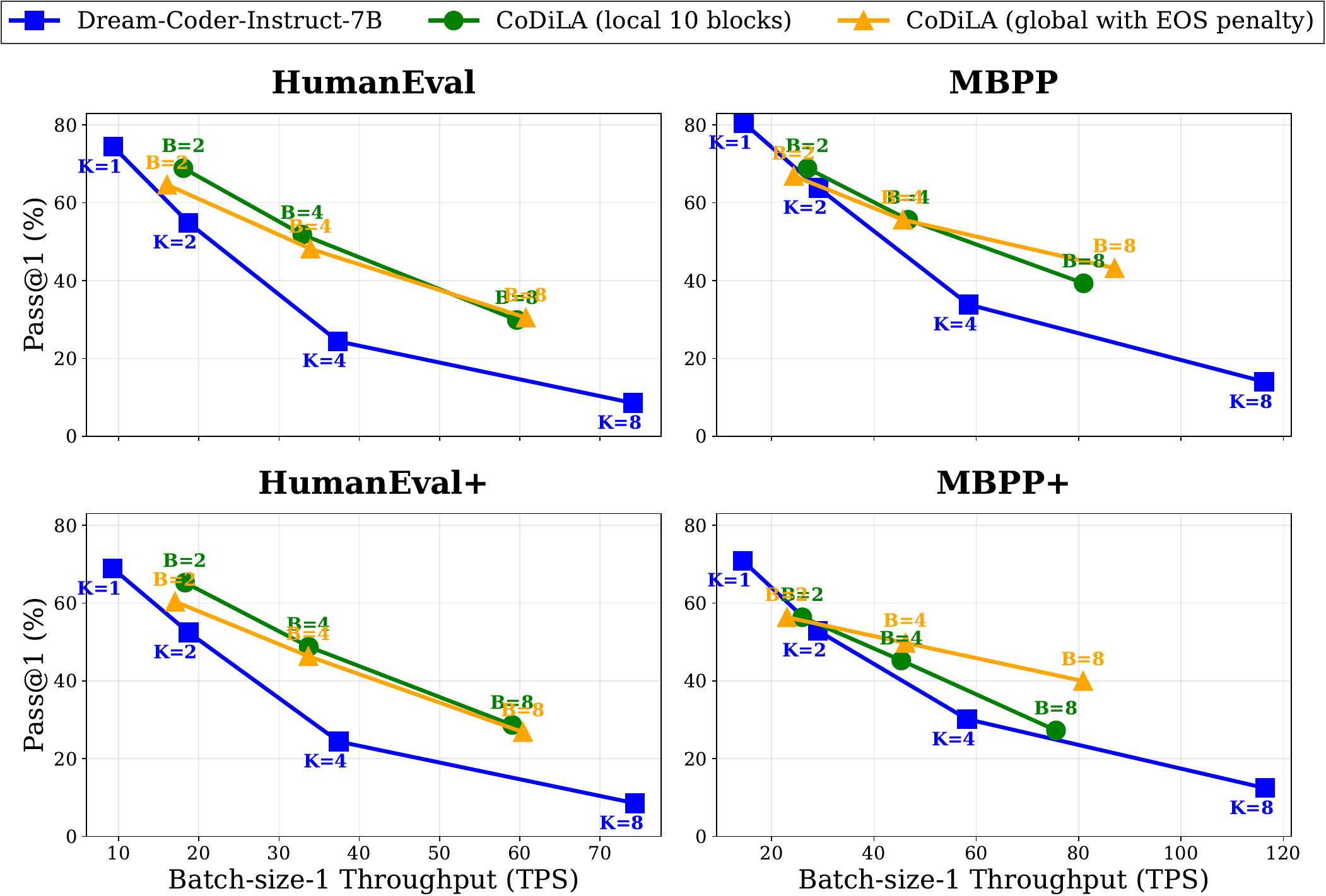}
    \caption{\rebuttal{Selecting the blocks to decode globally based on DLM entropy performs comparable to a fixed lookahead window.}}
    \label{fig:candidate-scope}
\end{figure}

\subsection{Generation of Multiple Blocks in Parallel}\label{app:multiple-blocks}

Although \name's default static decoding strategy unmasks a single block per iteration, it also permits the simultaneous decoding of multiple blocks. 
We ablate the effect of generating one block versus two blocks in parallel on HumanEval.
As detailed in \cref{tab:multiple_blocks}, at an equivalent level of total parallelism ($K$), preserving a single contiguous block yields higher accuracy than distributing the unmasking across multiple smaller blocks.

\begin{table}[h!]
\centering
\caption{\textbf{Impact of generating multiple blocks.} Comparison of Pass@1 (\%) and throughput (TPS) on HumanEval when decoding one versus two blocks in parallel. For a given block size $B$, generating two blocks doubles the total parallelism ($K=2B$).}
\label{tab:multiple_blocks}
\begin{tabular}{lrrrrrr}
\toprule
 & \multicolumn{2}{c}{\textbf{B=2}} & \multicolumn{2}{c}{\textbf{B=4}} & \multicolumn{2}{c}{\textbf{B=8}} \\
\cmidrule(lr){2-3} \cmidrule(lr){4-5} \cmidrule(lr){6-7}
 & Pass@1 & TPS & Pass@1 & TPS & Pass@1 & TPS \\
\midrule
1 Block ($K=B$)   & 68.9 & 27 & 55.6 &  47 & 39.3 &  81 \\
2 Blocks ($K=2B$) & 53.3 & 54 & 31.2 &  95 & 22.3 & 159 \\
\bottomrule
\end{tabular}
\end{table}

\subsection{Any-Order Generation Analysis}\label{app:any-order}
To verify that our block-wise AR decoding does not impose a strict left-to-right bias on the global sequence, we analyzed \name's any-order capabilities. 
Specifically, we computed the Spearman correlation between a strictly causal left-to-right generation order and the empirical sampling order produced by the models on the MBPP benchmark. 

As shown in \cref{tab:any_order}, the base DLM exhibits a near-perfect linear correlation ($\approx$0.999), indicating that its standard parallel decoding practically defaults to a left-to-right sequence. 
In contrast, \name's correlation noticeably decreases as the block size increases. This confirms that by effectively delegating local dependencies to the auxiliary AR model, \name not only preserves but actively enhances the global non-causal, any-order advantages of the underlying DLM.

\begin{table}[h!]
\centering
\caption{\textbf{Any-order generation behavior.} Spearman correlation between the model's empirical sampling order and a strict left-to-right sequence on MBPP. Lower values indicate stronger non-causal, any-order decoding behavior.}
\label{tab:any_order}
\begin{tabular}{lcc}
\toprule
\textbf{Block Size ($\mathbf{B=K}$)} & \textbf{Dream-Coder-7B} & \textbf{\name} \\
\midrule
2 & 0.9998 & 0.9186 \\
4 & 0.9999 & 0.9021 \\
8 & 0.9994 & 0.8344 \\
\bottomrule
\end{tabular}
\end{table}

\subsection{Throughput at Larger Batch Sizes}\label{app:batch-size}

Our primary results report the throughput using a batch size of 1. 
To evaluate \name's efficiency in more realistic, high-throughput deployment scenarios, we also ablate the generation throughput for larger batch sizes. 
As shown in \cref{tab:batch_size}, moving to larger batches allows the AR model to properly utilize the GPU, effectively amortizing its computational cost. 
While \name exhibits a throughput penalty at batch size 1, this gap rapidly closes as the batch size increases. 
By batch size 8, the overhead of the AR model is completely marginalized. 

\begin{table}[h!]
\centering
\caption{\textbf{Throughput across varying batch sizes.} Generation speed (tokens/sec) measured on a single NVIDIA A100-80GB GPU on the HumanEval benchmark using highly parallel decoding ($K=B=8$).}
\label{tab:batch_size}
\begin{tabular}{ccc}
\toprule
\textbf{Batch Size} & \textbf{Dream-Coder-7B} & \textbf{\name} \\
\midrule
1 & 74.6 & 61.4 \\
2 & 74.4 & 66.5 \\
4 & 72.6 & 70.4 \\
8 & 71.0 & 72.8 \\
\bottomrule
\end{tabular}
\end{table}

\section{Non-causal Tasks}

\subsection{Infilling}\label{app:infilling}

\paragraph{Setup}
We evaluate the infilling performance of \name{} on the multi-line HumanEval-Infilling benchmark~\cite{fried2022incoder}. This task requires the model to synthesize missing code spans that are consistent with both preceding and succeeding multi-line context. We use the official OpenAI evaluation suite to assess functional correctness.

Our evaluation is based on the DreamOn-v0-7B checkpoint. We set the maximum prompt length to $2048$ tokens, with minimum and maximum generation lengths of $64$ and $128$ tokens, respectively. We use a batch size of $1$ and run generation for $256$ decoding steps. Sampling is performed with temperature $0.2$ and top\_p $0.9$.

We employ an entropy-based decoding algorithm using a threshold value $\tau \in  [0.2, 0.5]$. EOS token deletion is enabled, padding to maximum sequence length is disabled, and outputs are generated in overwrite mode. Mask expansion is disabled for this baseline configuration. We evaluate on the full dataset without restricting the number of samples. Table~\ref{tab:infilling} details our results. \looseness -1

\paragraph{CoDiLA Implementation}
At each denoising iteration, we operate on contiguous masked segments, dynamically forming blocks of up to size $B=4$. 

To guide the unmasking process, we introduce a confidence signal derived from a mixture of entropy estimates from both the DLM and the auxiliary AR model. Specifically, we compute a weighted entropy score
\[
H = \lambda H_{\text{DLM}} + (1 - \lambda) H_{\text{AR}},
\]
where we set $\lambda = 0.5$. This symmetric mixture balances global bidirectional uncertainty from the DLM with local left-to-right fluency signals from the AR model.

Given this score, we perform confidence-based selection by sampling candidates from the AR model and unmasking tokens whose mixed entropy falls below a fixed threshold. Importantly, when a block is selected for unmasking, the final token values are always taken from the AR model, ensuring strong syntactic consistency and lexical validity, while the DLM provides global contextual guidance through the entropy signal.

To maintain robustness in fragmented or low-confidence regions, we introduce a fallback mechanism. If a block contains fewer than $B=4$ tokens, if it includes DreamOn-specific control tokens (i.e., expansion or deletion tokens), or if its mixed entropy score exceeds the selection threshold, we revert to standard diffusion-based unmasking driven solely by the DLM.

\subsection{ParallelBench}\label{app:parallelbench}
\paragraph{Setup}
We evaluate \name{} on the waiting line tasks of ParallelBench~\cite{kang_parallelbench_2026} to assess its effectiveness in preserving complex, non-causal planning capabilities during parallel generation. We follow the standard evaluation protocol of greedy decoding with a maximum number of tokens set to $32$.
In this experiment, we also deploy the auxiliary AR model specifically as a \textit{verifier} rather than a direct generator.
Here, we use the Dream-v0-Instruct-7B model (no coding specialization) due better baseline performance. The AR model (Qwen3-0.6) was trained on training examples of the ParallelBench waiting line tasks to learn the syntax. 
To integrate \name{} into the entropy-based top-$K$ unmasking schedule of the base DLM, we fix the AR block size to $B=4$. At each denoising step, the DLM computes the conditional marginals for all masked tokens. As before, for each AR block, we project these marginals into the AR model's embedding space to create soft embeddings, which are then encapsulated by \texttt{<think>} and \texttt{<$\backslash$think>} boundary tokens. 
Then, the AR model autoregressively decodes a locally coherent sequence of tokens from these soft prompts. 
To implement CoDiLA verification, we now compare this AR-generated sequence against the DLM's greedy top-1 predictions for the same block. For every position where the DLM's prediction mismatches the AR model's generation---or if the AR model terminates early, leaving missing tokens---we apply a fixed penalty of $1.0$ to the DLM's base token-wise entropy. 
Finally, tokens are unmasked globally based on this modulated entropy. By artificially lowering the confidence of locally incoherent predictions, \name{} successfully deters the DLM from unmasking in parallel erroneous local structures.
We evaluate across a range of unmasking rates $K \in \{1, 2, 4, 8, 16\}$, representing the total number of tokens decoded in parallel per global denoising iteration. \looseness -1

\paragraph{Results}
As shown in Table~\ref{tab:parallel_bench}, utilizing \name{} as an entropy verifier improves the accuracy over the baseline Dream-Instruct-7B model. Notably, the benefits of the AR verifier become most pronounced at highly parallel decoding regimes where standard entropy heuristics degrade. For instance, at $K=8$ and $K=16$, \name{} improves performance by 10.5 and 7.0 percentage points, respectively. This demonstrates that penalizing local incoherence effectively stabilizes the parallel sampling trajectories, allowing the model to retain planning accuracy while decoding multiple tokens simultaneously.

\begin{table}[h!]
\centering
\caption{
\rebuttal{
\textbf{ParallelBench Waiting Line.} Comparison of accuracy (\%) between Dream-Instruct-7B and \name{} acting as a verifier.
}
}
\label{tab:parallel_bench}
\begin{tabular}{lrrrrrr}
\toprule
 & \textbf{K=1} & \textbf{K=2} & \textbf{K=4} & \textbf{K=8} & \textbf{K=16}\\
\cmidrule(lr){2-2} \cmidrule(lr){3-3} \cmidrule(lr){4-4} \cmidrule(lr){5-5}  \cmidrule(lr){6-6} 
Dream-Instruct-7B & 85.1 & \textbf{85.5} & 63.2 & 52.6 & 43.3\\
\name{} & \textbf{88.6} & 85.3 & \textbf{68.6} & \textbf{63.1} & \textbf{50.3}\\
\bottomrule
\end{tabular}
\end{table}

\subsection{Graph Traversal (Planning)}\label{app:planning}

\paragraph{Task Description}
The objective of the task is to navigate a graph by identifying the correct path between a specified start and goal node amid distracting, dead-end edge connections. The model is prompted with a shuffled list of all available edges, followed by the specific start and goal nodes. For example, the input format is structured as:
\begin{center}
\texttt{4,7 | 5,8 | 7,0 | 3,1 | ... | 0,2 / 3,9}
\end{center}
where node \texttt{3} is the start and node \texttt{9} is the goal. The model must process this unordered graph topology to output the correct, ordered sequence of edges comprising the valid path:
\begin{center}
\texttt{3,1 | 1,7 | 7,5 | 5,8 | 8,9}
\end{center}

\paragraph{Setup}
We first train the baseline DLM (MGDM), which consists of a 3-layer Transformer with a total of 6M parameters, for 1200 epochs. 
Because this baseline utilizes a task-specific tokenizer, we train the auxiliary \name AR model from scratch for 100 epochs, adopting an identical 3-layer Transformer architecture with causal attention. 
During inference, we apply our coherence verification strategy: we use the standard top-$K$ decoding provided by MGDM (which relies on maximum probability instead of entropy for confidence scoring) and actively re-mask any mismatched tokens between the DLM predictions and the AR verification step.
The maximum generation length (including the prompt) is set to 80 both for training and testing. 

\paragraph{Results}
As shown in \cref{tab:planning}, \name successfully improves planning accuracy, with the highest gains observed at larger block sizes and fewer decoding iterations (i.e., highly parallel regimes). 
Because the AR model in this configuration acts as a local verifier rather than a direct sequence generator, expanding the block size provides richer joint context and strictly improves performance.

\begin{table}[h!]
\centering
\caption{Accuracy (\%) on graph traversal (planning)}
\label{tab:planning}
\begin{tabular}{lccc}
\toprule
Decoding iterations & MGDM (repr.) & CoDiLA ($B=4$) & CoDiLA ($B=8$) \\
\cmidrule(lr){1-1}\cmidrule(lr){2-4}
2 & 94.4 & 95.3 & \textbf{95.4} \\
4 & 96.5 & 96.8 & \textbf{96.9} \\
8 & 97.8 & \textbf{98.2} & \textbf{98.2} \\
16 & 99.2 & \textbf{99.3} & \textbf{99.3} \\
20 & 99.3 & \textbf{99.5} & 99.4 \\
\hline
\end{tabular}
\end{table}

\newpage
\section{Proofs}\label{sec:proofs}
\begin{theorem*}[Restatement of~\cref{thm:nelbo_lowest_achievable}]
    Consider discrete diffusion on random sequences $\mathbf x_0 = [b_0^1, b_0^2, \ldots, b_0^{L/B}]$ where $b_t^i \in \mathcal W$, and a denoising model $p_\theta$ adopting the block independence bias of Definition~\ref{def:block_factorization}. Then the smallest possible NELBO is
    \begin{equation*}
        \textstyle \mathcal B_B := H[\mathbf x_0] + \sum_{t=1}^T\big(\underbrace{\textstyle \sum_{i=1}^{L/B} H[b_{t-1}^i \vert \mathbf x_t] - H[\mathbf x_{t-1} \vert \mathbf x_t]}_{\textit{total correlation across blocks}}\big).
    \end{equation*}
    Further, suppose $b_t^i = [x_t^{(i-1)\cdot B+1}, \ldots, x_t^{i \cdot B}]$ are blocks of tokens $x_t^k \in \mathcal V$. Then, $\mathcal B_1 \geq \mathcal B_B$ with $\mathcal B_1 - \mathcal B_B$ given by
    \begin{equation*} \textstyle \sum_{t=1}^T\sum_{i=1}^{L/B} \big (\underbrace{\textstyle\sum_{j=1}^B H[x_{t-1}^{(i-1)\cdot B + j)} \vert \mathbf x_t] - H[b_{t-1}^i \vert \mathbf x_t]}_{\textit{total correlation within block i}} \big ).
    \end{equation*}
\end{theorem*}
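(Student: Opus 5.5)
The proof rests on one identity: each reverse-step term of the NELBO equals a conditional entropy plus a KL gap. Minimizing each term separately over block-factorized models then leaves exactly the total correlation across blocks.

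\textbf{Step 1: rewrite each term.} I would start from the decomposition $\mathcal L_{\text{NELBO}} = \mathcal L_T + \sum_t \mathcal L_t$ with $\mathcal L_T=0$. Because $q(\mathbf x_{1:T}\vert\mathbf x_0)$ is the true forward process, I would use the equivalent form
\begin{equation*}
\mathbb E_{q(\mathbf x_{0:T})}\big[-\log p_\theta(\mathbf x_{t-1}\vert \mathbf x_t)\big] \;-\; H[\mathbf x_{1:T}\vert \mathbf x_0] \;+\; \text{const}.
\end{equation*}
Taking expectations over $q(\mathbf x_0)$, each cross-entropy splits as
\begin{equation*}
\mathbb E\big[-\log p_\theta(\mathbf x_{t-1}\vert\mathbf x_t)\big] = H[\mathbf x_{t-1}\vert \mathbf x_t] + \mathbb E_{\mathbf x_t}\, \mathrm{D}_{\mathrm{KL}}\big(q(\mathbf x_{t-1}\vert \mathbf x_t)\,\Vert\, p_\theta(\mathbf x_{t-1}\vert\mathbf x_t)\big).
\end{equation*}
By the Markov property and the chain rule, $H[\mathbf x_{0:T}] = H[\mathbf x_T] + \sum_t H[\mathbf x_{t-1}\vert\mathbf x_t]$. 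Since $H[\mathbf x_T]=0$ (the state is all masks), $\sum_t H[\mathbf x_{t-1}\vert \mathbf x_t] - H[\mathbf x_{1:T}\vert\mathbf x_0] = H[\mathbf x_0]$. Hence
\begin{equation*}
\mathcal L_{\text{NELBO}} = H[\mathbf x_0] + \sum_t \mathbb E\,\mathrm{D}_{\mathrm{KL}}\big(q(\mathbf x_{t-1}\vert\mathbf x_t)\Vert p_\theta(\mathbf x_{t-1}\vert\mathbf x_t)\big).
\end{equation*}

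\textbf{Step 2: best block-factorized model.} Next I would show that under Definition~\ref{def:block_factorization}, the induced reverse kernel $p_\theta(\mathbf x_{t-1}\vert\mathbf x_t)$ from \eqref{eq:marginalize} also factorizes across blocks. This holds because $q(\mathbf x_{t-1}\vert\mathbf x_t,\mathbf x_0)$ factorizes over positions, so the expectation over a block-product $p_\theta(\mathbf x_0\vert\mathbf x_t)$ stays a block product. Consequently the reverse-step cross-entropy becomes $\sum_i \mathbb E[-\log p_\theta(b^i_{t-1}\vert \mathbf x_t)]$. Gibbs' inequality bounds each summand below by $H[b^i_{t-1}\vert\mathbf x_t]$, with equality iff $p_\theta(b^i_{t-1}\vert\mathbf x_t)=q(b^i_{t-1}\vert\mathbf x_t)$.

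This equality case is the main obstacle: we must show it is attainable within the parameterization. I would take $p_\theta(b^i_0\vert\mathbf x_t)=q(b^i_0\vert\mathbf x_t)$. By the position-wise factorization of $q(\cdot\vert\mathbf x_t,\mathbf x_0)$, pushing this block marginal through the kernel gives exactly $q(b^i_{t-1}\vert\mathbf x_t)$. The point is that block $i$ of $\mathbf x_{t-1}$ depends on $\mathbf x_0$ only through $b^i_0$ given $\mathbf x_t$. Since all $t$ share one $p_\theta(\mathbf x_0\vert\mathbf x_t)$ but each $\mathbf x_t$ is a distinct input, the choices do not conflict. Substituting the minimizer into the Step 1 bookkeeping gives $\mathcal B_B = H[\mathbf x_0] + \sum_t(\sum_i H[b^i_{t-1}\vert\mathbf x_t]-H[\mathbf x_{t-1}\vert\mathbf x_t])$.

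\textbf{Step 3: compare with $B=1$.} Applying the same formula with singleton blocks gives $\mathcal B_1$. Subtracting, the terms $H[\mathbf x_0]$ and $H[\mathbf x_{t-1}\vert\mathbf x_t]$ cancel. Grouping tokens by their block leaves $\sum_t\sum_i\big(\sum_j H[x^{(i-1)B+j}_{t-1}\vert\mathbf x_t]-H[b^i_{t-1}\vert\mathbf x_t]\big)$. Each bracket is a conditional total correlation, nonnegative by subadditivity of conditional entropy, so $\mathcal B_1\ge\mathcal B_B$.
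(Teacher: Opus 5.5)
Your proof is correct and follows the same route as the paper's. You write the NELBO as $H[\mathbf x_0]$ plus per-step KLs to the true reverse kernel $q(\mathbf x_{t-1}\vert\mathbf x_t)$, minimize each block-wise by Gibbs' inequality, and subtract the $B=1$ case. Your Step~2 is more careful than the paper: the paper imposes the factorization directly on $p_\theta(\mathbf x_{t-1}\vert\mathbf x_t)$ and never checks that the optimum $q(b^i_{t-1}\vert\mathbf x_t)$ is induced through \eqref{eq:marginalize} by a block-factorized $p_\theta(\mathbf x_0\vert\mathbf x_t)$, as Definition~\ref{def:block_factorization} requires.

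One small fix concerns your justification that the per-$t$ choices do not conflict. It is not that the inputs differ across $t$: the same partially masked sequence arises at many $t$. The real reason is that in masked diffusion $q(\mathbf x_0\vert\mathbf x_t=\mathbf y)$ is just $q(\mathbf x_0)$ conditioned on agreeing with the unmasked entries of $\mathbf y$, so it is independent of $t$.
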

\begin{proof}
    Following Proposition~1 in \citet{liu_discrete_2025}, we first derive the closed-form solution to the negative ELBO for a model only constrained by the structural independence bias (Definition~\ref{def:token_factorization}), but at the granularity of blocks instead of tokens (Definition~\ref{def:block_factorization}). Then, we demonstrate and quantify the decrease in the smallest achievable NELBO as the block size  $B$ increases.

    \textit{1. Decomposition of the factorization gap}
    
    Following \citet{ho_denoising_2020, sohl-dickstein_deep_2015}, the negative ELBO $\mathcal L_{\text{NELBO}}$ can be decomposed as follows:
    \begin{align}
        \mathcal L_{\text{NELBO}}  & = \textstyle \mathbb E_{\mathbf x_0 \sim q}[\mathcal L_{\text{NELBO}}^{x_0}] = \mathbb{E}_{\mathbf{x}_{0:T} \sim q} \big[ \log \frac{q(\mathbf{x}_{1:T} \vert \mathbf{x}_0)}{p_\theta(\mathbf{x}_{0:T})} \big] = H[\mathbf x_0] + \mathbb{E}_{\mathbf{x}_{0:T} \sim q} \big[ \log \frac{q(\mathbf{x}_{0:T})}{p_\theta(\mathbf{x}_{0:T})} \big]\nonumber\\
        & = \textstyle H[\mathbf x_0] + \mathbb{E}_{\mathbf{x}_{0:T} \sim q} \big[ \log \frac{q(\mathbf{x}_{T})}{p_\theta(\mathbf{x}_{T})} + \sum_{t=1}^T \log \frac{q(\mathbf{x}_{t-1} \vert \mathbf x_t)}{p_\theta(\mathbf{x}_{t-1} \vert \mathbf x_t)} \big]\nonumber\\
        & = \textstyle H[\mathbf x_0] + \text{D}_{\text{KL}}(q(\mathbf x_T) \parallel p_\theta(\mathbf x_T)) + \sum_{t=1}^T \mathbb E_{\mathbf x_t \sim q}\big[\text{D}_{\text{KL}}(q(\mathbf x_{t-1} \vert \mathbf x_t) \parallel p_\theta(\mathbf x_{t-1} \vert \mathbf x_t))\big].\label{eq:proof_nelbo_expression}
    \end{align}
    Note the use of  Markovianity of both the true noising process $q$ and the learned denoising process $p_\theta$. Since conditional independence (Definition~\ref{def:block_factorization}) does not impose restrictions on $p_\theta(\mathbf x_T)$, we set $p_\theta(\mathbf x_T) = q(\mathbf x_T)$ ensuring that $\text{D}_{\text{KL}}(q(\mathbf x_T) \parallel p_\theta(\mathbf x_T)) = 0$. Note that this is achievable in practice, since $q(\mathbf x_T)$ converges to a known stationary distribution 
    as $T \to \infty$. Therefore, we must only derive the lowest achievable value of 
    \begin{align*}
        \textstyle \mathbb E_{\mathbf x_t \sim q}\big[\text{D}_{\text{KL}}(q(\mathbf x_{t-1} \vert \mathbf x_t) \parallel p_\theta(\mathbf x_{t-1} \vert \mathbf x_t)) & = \textstyle \mathbb E_{\mathbf x_t \sim q}\big[\text{D}_{\text{KL}}(q(\mathbf x_{t-1} \vert \mathbf x_t) \parallel \prod_{i=1}^{L/B} p_\theta(b_{t-1}^i | \mathbf{x}_t))\\
        & = \textstyle \mathbb E_{\mathbf x_{t\text{-}1:t} \sim q}[- \log \prod_{i=1}^{L/B} p_\theta(b_{t-1}^i | \mathbf{x}_t)] - H[\mathbf x_{t-1} \vert \mathbf x_t]\\
        & = \textstyle \mathbb E_{\mathbf x_{t} \sim q}\big[\sum_{i=1}^{L/B}\underbrace{\mathbb E_{b_{t\text{-}1}^i \sim q(b_{t\text{-}1}^i \vert \mathbf x_t)}[- \log p_\theta(b_{t-1}^i | \mathbf{x}_t)]}_{\mathclap{\text{cross entropy of } p_\theta(b_{t-1}^i \vert \mathbf x_t) \text{ with respect to } q(b_{t-1}^i \vert \mathbf x_t)}}\big] - H[\mathbf x_{t-1} \vert \mathbf x_t].
    \end{align*}
    Since the cross-entropy factorizes across blocks, and because the cross-entropy is minimized if the distributions coincide, we may set $p_\theta(b_{t-1}^i \vert \mathbf x_t) = q(b_{t-1}^i \vert \mathbf x_t)$. Given this choice for $p_\theta$, we arrive at the following expression:
    \begin{equation*}
        \textstyle \mathbb E_{\mathbf x_t \sim q}\big[\text{D}_{\text{KL}}(q(\mathbf x_{t-1} \vert \mathbf x_t) \parallel p_\theta(\mathbf x_{t-1} \vert \mathbf x_t)) = \textstyle \sum_{i=1}^{L/B}  H[b_{t-1}^i \vert \mathbf x_t] - H[\mathbf x_{t-1} \vert \mathbf x_t].
    \end{equation*}
    Plugging the result into Equation~\eqref{eq:proof_nelbo_expression}, we obtain the desired expression for $\mathcal B_B$, the lowest achievable NELBO.

    \textit{2. Decrease in NELBO for non-trivial block sizes}

    We now quantify the reduction in the lower bound $\mathcal B_B$ relative to the token-level baseline $\mathcal B_1$. Recall that a block $b_{t-1}^i$ is composed of the subsequence of tokens $[x_{t-1}^{(i-1)\cdot B + 1}, \ldots, x_{t-1}^{i\cdot B}]$.
    Subtracting the expression for $\mathcal B_B$ derived in Part 1 from the expression for $\mathcal B_1$ (where block size is 1), the terms $H[\mathbf x_0]$ and $\sum_{t=1}^T H[\mathbf x_{t-1} \vert \mathbf x_t]$ cancel out, yielding:
    \begin{align*}
        \mathcal B_1 - \mathcal B_B & = \textstyle \sum_{t=1}^T \left( \sum_{k=1}^L H[x_{t-1}^k \vert \mathbf x_t] - \sum_{i=1}^{L/B} H[b_{t-1}^i \vert \mathbf x_t] \right).
    \end{align*}
    By grouping the token-level entropies according to their block assignment, we can rewrite the first sum as a nested summation over blocks $i$ and positions $j$ within that block:
    \begin{align*}
         \textstyle \sum_{k=1}^L H[x_{t-1}^k \vert \mathbf x_t] & = \textstyle \sum_{i=1}^{L/B} \sum_{j=1}^B H[x_{t-1}^{(i-1)\cdot B + j} \vert \mathbf x_t].
    \end{align*}
    Substituting this back into the difference equation allows us to merge the sums over $i$, finishing the proof:
    \begin{equation*}
        \mathcal B_1 - \mathcal B_B = \textstyle \sum_{t=1}^T \sum_{i=1}^{L/B} \big(\textstyle \sum_{j=1}^B H[x_{t-1}^{(i-1)\cdot B + j} \vert \mathbf x_t] - H[b_{t-1}^i \vert \mathbf x_t] \big).
    \end{equation*}
    The term within the brackets corresponds precisely to the total correlation (or multivariate mutual information) within block $i$. Since total correlation is non-negative (the entropy of a joint variable is always less than or equal to the sum of its marginal entropies), it follows that $\mathcal B_1 - \mathcal B_B \geq 0$. Thus, a non-trivial block size strictly decreases the lower bound on the NELBO by internalizing the dependencies that are local to the block.
\end{proof}

\begin{theorem*}[Restatement of Theorem~\ref{thm:soft_vs_hard}]
Let $q$ be the true joint distribution over a block $b = (x^1, \dots, x^B)$ with marginals $\pi = (\pi^1, \dots, \pi^B)$. Let $\mathcal{F}(\pi)$ denote the \textbf{Fréchet class} of $\pi$, defined as the set of all valid joint distributions having marginals $\pi$.
Consider an autoregressive model $p_\phi^{AR}$ attempting to recover $q$ by selecting tokens from the support of $\pi$:
\begin{enumerate}
    \item \textbf{Sufficiency of Soft-Conditioning:} If $p_\phi^{AR}$ is conditioned on the full marginals $\pi$, there exists a parameterization $\phi$ such that $p_\phi^{AR}(\ \cdot\ \vert \pi) = q(\ \cdot \ )$.
    
    \item \textbf{Fréchet Class Restriction:} Let $\pi_{\text{top-k}}$ be the marginals truncated to the $k$ most likely tokens at each position. Conditioning on $\pi_{\text{top-k}}$ restricts the valid solution space to the constrained Fréchet class $\mathcal{F}(\pi_{\text{top-k}})$, strictly limiting the support of any recoverable distribution to the Cartesian product of the top-$k$ sets.
    
    \item \textbf{Exclusion of the Global Mode:} This restriction introduces an irreducible bias. There exist joint distributions $q$ where the global mode $b^* = \arg\max_b q(b)$ is strictly excluded from the support of the restricted class. Formally:
    \begin{equation}
        \exists q \text{ such that } \forall q^\prime \in \mathcal{F}(\pi_{\text{top-k}}), \quad q^\prime(b^*) = 0 < q(b^*).
    \end{equation}
    Thus, high-probability coherent structures can be rendered unrecoverable solely due to marginal truncation.
\end{enumerate}
\end{theorem*}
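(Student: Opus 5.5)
The plan is to treat the three parts separately, since each rests on a different elementary fact about the joint distribution $q$ over a block.

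\textbf{Part (1).} This is a representability argument, so I would make it constructive. By the chain rule,
\[
q(b)=\prod_{j=1}^B q(x^j\mid x^{<j}).
\]
An autoregressive model conditioned on $\pi$ factorizes in exactly the same way. I would therefore define the per-step conditionals of $p_\phi^{AR}(\cdot\mid\pi)$ to equal $q(x^j\mid x^{<j})$ for the particular input $\pi$. To make this well defined, I would treat $\phi$ as an arbitrary map from the input $(\pi, x^{<j})$ to the simplex, which is the idealized expressive AR model. The product then equals $q$.

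Two points need a short remark:
\begin{itemize}
\item Every prefix $x^{<j}$ with positive probability lies in $\mathrm{supp}(\pi^1)\times\cdots\times\mathrm{supp}(\pi^{j-1})$. So ``selecting tokens from the support of $\pi$'' loses nothing.
\item $q\in\mathcal F(\pi)$ holds trivially, so the target lies in the class reachable by soft conditioning.
\end{itemize}
Following the Remark, I would note that the choice of $\phi$ may depend on the copula of $q$. The statement is existential, so this dependence is allowed.

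\textbf{Part (2).} When the model sees only $\pi_{\text{top-k}}$, it selects tokens from the top-$k$ set $S_j$ at each position. Any recoverable distribution $q'$ therefore puts all its mass on $S_1\times\cdots\times S_B$. If $q'$ is additionally required to match the (renormalized) truncated marginals, it lies in $\mathcal F(\pi_{\text{top-k}})$.

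To see that the support is confined to the product, I would use the marginal argument:
\[
q'(b)\le q'_j(x^j)=0 \quad\text{for any } b \text{ with some } x^j\notin S_j .
\]
This inequality is the whole argument for this part.

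\textbf{Part (3).} I would give an explicit counterexample with $B=2$ and $k=1$. The idea is a distribution whose marginal modes point at different blocks, while the single most likely block uses a token that is marginally unpopular. I would take four coherent blocks:
\begin{itemize}
\item $(a,c)$ with probability $0.28$,
\item $(a,d)$ with probability $0.24$,
\item $(e,c)$ with probability $0.24$,
\item $(f,f)$ with probability $0.24$.
\end{itemize}
The marginals are then:
\begin{itemize}
\item $\pi^1$: $a$ has $0.52$, $e$ has $0.24$, $f$ has $0.24$, so the top-1 token is $a$;
\item $\pi^2$: $c$ has $0.52$, $d$ has $0.24$, $f$ has $0.24$, so the top-1 token is $c$.
\end{itemize}
Here the mode is $(a,c)$, which survives truncation, so this example does not yet exclude the mode. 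I need to redesign it.

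I want the mode block $b^*=(u,v)$ with $u$ not marginally top-1. That happens when mass on other first tokens is split across many blocks that all share one first token. Take $b^*=(u,v)$ with probability $0.3$. Then take blocks $(w,y_i)$ for $i=1,\dots,4$, each with probability $0.175$. Now $\pi^1(w)=0.7>\pi^1(u)=0.3$, so $u\notin S_1$. Part (2) then gives $q'(b^*)=0$ for every $q'$, while $q(b^*)=0.3$ is the largest single block probability. The same construction extends to general $k$ by using $k$ ``popular'' first tokens, each of which is split over many blocks.

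\textbf{Main obstacle.} The only delicate step is the precise meaning of Part (1). I expect the difficulty is to phrase ``$p_\phi^{AR}$ conditioned on $\pi$'' so that the claim is nontrivial yet true. I would state it as the existence of conditional maps realizing the chain-rule factors of $q$. I would also flag that $\pi$ alone does not determine $q$, since $\mathcal F(\pi)$ is generally not a singleton. This is exactly why the Remark is needed.
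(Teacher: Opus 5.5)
Your proposal is correct. Parts (2) and (3) match the paper in substance:
\begin{itemize}
\item In Part (2), the paper likewise confines the support of any $q'\in\mathcal F(\pi_{\text{top-k}})$ to the product of the top-$k$ sets.
\item In Part (3), the paper's counterexample is exactly your redesigned type. It takes $B=2$, $k=1$ and $q(\mathrm{Roger},\mathrm{Roger})=0.45$, while the competing first token Houston collects $0.55$ spread over blocks of mass $0.25, 0.25, 0.05$. The mode's first token thus loses the marginal top-1, although no single competing block beats the mode.
\end{itemize}
In a write-up, just drop your first, failed example.

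The genuine difference is Part (1). The paper invokes Sklar's theorem to split $q$ into the given marginals $\pi$ and a copula $C$, and then appeals to universal approximation so that $\phi$ implements $C$. You instead write down the autoregressive conditionals $q(x^j\mid x^{<j})$ directly via the chain rule. Your route is more elementary and self-contained in three ways:
\begin{itemize}
\item It needs no artificial ordering of the vocabulary to define CDFs.
\item It sidesteps the paper's assertion that the copula is \emph{unique}. That assertion fails for discrete marginals, although only existence is actually used.
\item Your support check makes explicit why ``selecting tokens from the support of $\pi$'' costs nothing under soft conditioning but bites under truncation.
\end{itemize}
Your construction also shows that the existential claim uses $\pi$ only through its support. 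So the substantive content of the theorem lies in the support constraint of Parts (2)--(3), not in Part (1). The Sklar framing, by contrast, conveys the design intuition that the DLM supplies the marginals and the AR model only supplies the dependence structure, which is what the Remark builds on.

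Finally, your sketch with $k$ popular first tokens, each split over enough blocks, handles the top-$k$ formulation for general $k$. This assumes the vocabulary is large enough. The paper only exhibits the case $k=1$.
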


\begin{proof}

\textit{1. Sufficiency.}
By Sklar’s Theorem, any discrete joint distribution $q$ uniquely decomposes into its marginals $\pi$ and a copula $C$. Since $\pi$ defines the input domain and $p_\phi^{AR}$ acts as a universal function approximator, there exists a parameter setting $\phi$ that implements the copula $C$, recovering $q$ exactly.

\textit{2. Fréchet Class Restriction.}
Let $\mathcal{S}_k^i = \{t \in \mathcal{V} \vert \text{rank}(t,\pi^i) \le k\}$ be the set of top-$k$ tokens at position $i$. Define the truncated marginal distribution $\pi_{\text{top-k}}^i(t) \propto \pi(t)\cdot  \mathds 1_{t \in S_k^i}$
Any distribution $q^\prime \in \mathcal{F}(\pi_{\text{top-k}})$ must satisfy the marginal constraints of $\pi_{\text{top-k}}$. Since the probability mass of tokens outside $\mathcal{S}_k^i$ is effectively zeroed out in the input, any valid joint distribution $q^\prime$ must have its support contained within the Cartesian product of these sets:
\begin{equation}
    \text{supp}(q^\prime) \subseteq \mathcal{S}_k^1 \times \mathcal{S}_k^2 \times \dots \times \mathcal{S}_k^B.
\end{equation}

\textit{3. Exclusion of the Global Mode.}
We demonstrate this exclusion via a counter-example. Let $B=2$, $k=1$, and $\mathcal V = \{Roger,Houston,You,I,They\}$. Consider a distribution $q(x^1, x^2)$ with the following probability mass function:
\begin{itemize}
    \item $q(Roger, Roger) = 0.45$ (The coherent mode $b^*$).
    \item $q(Houston, You) = 0.25$.
    \item $q(Houston, I) = 0.25$.
    \item $q(Houston, They) = 0.05$.
\end{itemize}
The induced marginals are:
\begin{itemize}
    \item Position 1: $\pi^1(Roger) = 0.45$, $\pi^1(Houston) = 0.55$. The top-1 token is $Houston$, so $\mathcal{S}_1^1 = \{Houston\}$.
    \item Position 2: $\pi^2(Roger) = 0.45$, $\pi^2(You) = 0.25$, $\pi^2(I) = 0.25$, $\pi^2(They) = 0.05$. The top-1 token is $Roger$, so $\mathcal{S}_1^2 = \{Roger\}$.
\end{itemize}
The restricted support is $\mathcal{S}_1^1 \times \mathcal{S}_1^2 = \{(Houston, Roger)\}$.
However, the true global mode is $b^* = (Roger, Roger)$. Since $Roger \notin \mathcal{S}_1^1$, the true mode lies outside the restricted support. Consequently, $q^\prime(Roger, Roger) = 0$ for all $q^\prime \in \mathcal{F}(\pi_{top-1})$, despite $(Roger, Roger)$ being the single most likely sequence. This proves that top-$k$ truncation prevents the recovery of the true mode in the general case.
\end{proof}

\begin{proposition}[Closed-Form KL-Divergence for Masked Diffusion according to \citet{sahoo_simple_2024, shi_simplified_2024, gong_scaling_2025}]\label{prop:cross_entropy_loss}
    Consider masked diffusion, i.e., a Markov chain $\mathbf x_{0}, \ldots, \mathbf x_T$ with position-wise independent forward process $q(\mathbf x_t | \mathbf x_{t-1}) = \prod_{i=1}^L q(x_t^i \vert x_{t-1}^i)$, where $q(x_t^i \vert x_{t-1}^i) = (1 - \beta_t)\delta_{x_t^i, x_{t-1}^i} + \beta_t \delta_{x_t^i, \texttt{[MASK]}}$. Define $\alpha_t := \prod_{s=1}^t 1- \beta_s$ and let $p_\theta(\mathbf{x}_{t-1} \vert \mathbf{x}_t) = \prod_{i=1}^L p_\theta(x_{t-1}^i \vert \mathbf x_t)$ where $p_\theta(x_{t-1}^i \vert \mathbf x_t)= \mathbb E_{p_\theta(x_0^i \vert \mathbf x_t)} [ q(x_{t-1}^i | x_t^i, x_0^i)]$, i.e., the model $p_\theta$ is assumed to have conditional token independence bias (Definition~\ref{def:token_factorization}). Assume further that $p_\theta(x_0^i \vert \mathbf x_t) = \delta_{x_0^i, x_t^i}$ if $x_t^i \not = \texttt{[MASK]}$ and $p_\theta(x_0^i=\texttt{[MASK]} \vert \mathbf x_t) = 0$. Then
    \begin{equation*}
        \text{D}_{\text{KL}}(q(\mathbf{x}_{t-1} \vert \mathbf{x}_t, \mathbf{x}_0) \vert\vert p_\theta(\mathbf{x}_{t-1} \vert \mathbf{x}_t)) = \sum_{i=1}^L   -\delta_{x_t^i = \texttt{[MASK]}} \tfrac{\alpha_{t-1} - \alpha_t}{1-\alpha_t} \log p_\theta(x_0^i \vert \mathbf{x}_t).
    \end{equation*}
\end{proposition}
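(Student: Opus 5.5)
Since everything factorizes across positions, the plan is to reduce the KL to a sum of per-position KL terms and then evaluate each one in closed form. Under the forward process, $q(\mathbf x_{t-1}\vert\mathbf x_t,\mathbf x_0)=\prod_i q(x_{t-1}^i\vert x_t^i,x_0^i)$, and by assumption $p_\theta(\mathbf x_{t-1}\vert\mathbf x_t)=\prod_i p_\theta(x_{t-1}^i\vert\mathbf x_t)$. The KL between product measures is the sum of the coordinate KLs, so
\[
\text{D}_{\text{KL}}\big(q(\mathbf x_{t-1}\vert\mathbf x_t,\mathbf x_0)\,\Vert\, p_\theta(\mathbf x_{t-1}\vert\mathbf x_t)\big)=\sum_{i=1}^L \text{D}_{\text{KL}}\big(q(x_{t-1}^i\vert x_t^i,x_0^i)\,\Vert\, p_\theta(x_{t-1}^i\vert\mathbf x_t)\big).
\]
It then suffices to show that each summand equals $-\delta_{x_t^i,\texttt{[MASK]}}\tfrac{\alpha_{t-1}-\alpha_t}{1-\alpha_t}\log p_\theta(x_0^i\vert\mathbf x_t)$. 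Throughout, $x_0^i\neq\texttt{[MASK]}$ since it is a data token.

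The first step is to compute the univariate posterior $q(x_{t-1}\vert x_t,x_0)$ by Bayes' rule, as in the Preliminaries, using $q(x_{t-1}\vert x_0)=\alpha_{t-1}\delta_{x_{t-1},x_0}+(1-\alpha_{t-1})\delta_{x_{t-1},\texttt{[MASK]}}$ and $q(x_t\vert x_{t-1})$ with $1-\beta_t=\alpha_t/\alpha_{t-1}$. There are two cases.
\begin{itemize}
\item If $x_t=x_0\neq\texttt{[MASK]}$, absorption forces $x_{t-1}=x_0$, so the posterior is $\delta_{x_{t-1},x_0}$.
\item If $x_t=\texttt{[MASK]}$, the posterior puts mass $\frac{\alpha_{t-1}\beta_t}{1-\alpha_t}=\frac{\alpha_{t-1}-\alpha_t}{1-\alpha_t}$ on $x_0$ and mass $\frac{1-\alpha_{t-1}}{1-\alpha_t}$ on \texttt{[MASK]}.
\end{itemize}

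The second step is to compute the model transition $p_\theta(x_{t-1}^i\vert\mathbf x_t)=\mathbb E_{p_\theta(x_0^i\vert\mathbf x_t)}[q(x_{t-1}^i\vert x_t^i,x_0^i)]$ using these formulas.
\begin{itemize}
\item In the unmasked case, the carry-over assumption $p_\theta(x_0^i\vert\mathbf x_t)=\delta_{x_0^i,x_t^i}$ gives $p_\theta(x_{t-1}^i\vert\mathbf x_t)=\delta_{x_{t-1}^i,x_t^i}$. This coincides with the true posterior (recall $x_t^i=x_0^i$), so the KL is $0$, matching the vanishing $\delta$ factor.
\item In the masked case, since $p_\theta$ puts no mass on \texttt{[MASK]}, we get $p_\theta(x_{t-1}^i=v\vert\mathbf x_t)=\frac{\alpha_{t-1}-\alpha_t}{1-\alpha_t}p_\theta(x_0^i=v\vert\mathbf x_t)$ for $v\neq\texttt{[MASK]}$, and mass $\frac{1-\alpha_{t-1}}{1-\alpha_t}$ on \texttt{[MASK]}.
\end{itemize}

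The final step evaluates the KL in the masked case directly. The \texttt{[MASK]} outcome has identical probability under both distributions and contributes $0$. The outcome $x_0^i$ contributes
\[
\frac{\alpha_{t-1}-\alpha_t}{1-\alpha_t}\log\frac{1}{p_\theta(x_0^i\vert\mathbf x_t)}.
\]
All other outcomes have zero mass under $q$. Summing over positions gives the claim.

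The main point of care is the per-position posterior calculation, in particular the identity $\alpha_{t-1}\beta_t=\alpha_{t-1}-\alpha_t$, and checking that the \texttt{[MASK]} masses match exactly so that only the cross-entropy term survives. The remaining edge case is $\alpha_t=1$ with $x_t^i=\texttt{[MASK]}$, which has probability zero and can be excluded.
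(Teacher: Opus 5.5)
Your proposal is correct and follows essentially the same route as the paper's proof. Like the paper, you factorize the KL over positions, show that unmasked positions contribute zero because both distributions equal $\delta_{x_{t-1}^i,x_0^i}$, and observe that at masked positions the \texttt{[MASK]} outcome has identical mass under $q$ and $p_\theta$, so only the $x_{t-1}^i=x_0^i$ term survives, with $p_\theta(x_{t-1}^i=x_0^i\vert\mathbf x_t)=\frac{\alpha_{t-1}-\alpha_t}{1-\alpha_t}\,p_\theta(x_0^i\vert\mathbf x_t)$.
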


\begin{proof}
We follow \citet{sahoo_simple_2024, shi_simplified_2024, gong_scaling_2025}. First, note that since $q(x_t^i \vert x_0^i) = \alpha_t \delta_{x_t^i, x_0^i} + (1-\alpha_t) \delta_{x_t^i, \texttt{[MASK]}}$, a direct application of Bayes' theorem results in the closed-form expression

\begin{align*}
    q(x_{t-1}^i \vert x_t^i, x_0^i) = \begin{cases}
        1 & \text{if }x_{t-1}^i = x_t^i = x_0^i,\\
        \frac{1-\alpha_{t-1}}{1-\alpha_t} & \text{if }x_{t-1}^i = x_{t}^i = \texttt{[MASK]},\\
        \frac{\alpha_{t-1} - \alpha_t}{1-\alpha_t} & \text{if }x_{t-1}^i = x_0^i, x_t^i = \texttt{[MASK]},\\
        0 & \text{otherwise}
    \end{cases}.
\end{align*}
We introduce the abbreviation $D$ and apply position-wise independence of $q(\mathbf{x}_{t-1} \vert \mathbf{x}_t, \mathbf{x}_0)$ and $p_\theta(\mathbf{x}_{t-1} \vert \mathbf{x}_t)$:
\begin{align*}
    D := \text{D}_{\text{KL}}(q(\mathbf{x}_{t-1} \vert \mathbf{x}_t, \mathbf{x}_0) \vert\vert p_\theta(\mathbf{x}_{t-1} \vert \mathbf{x}_t)) & = \mathbb E_{q(\mathbf{x}_{t-1} \vert \mathbf{x}_t, \mathbf{x}_0)}\big[ \tfrac{\log q(\mathbf{x}_{t-1} \vert \mathbf{x}_t, \mathbf{x}_0)}{\log p_\theta(\mathbf{x}_{t-1} \vert \mathbf{x}_t)}\big] = \sum_{i=1}^L \mathbb E_{q(x_{t-1}^i \vert x_t^i, x_0^i)}\big[ \tfrac{\log q(x_{t-1}^i \vert x_t^i, x_0^i)}{\log p_\theta(x_{t-1}^i \vert \mathbf{x}_t)}\big].
\end{align*}
Next, we use that if $x_t^i \not= \texttt{[MASK]}$, then $x_t^i = x_{0}^i$ and hence $q(x_{t-1}^i \vert x_t^i, x_0^i) = p_\theta(x_{t-1} \vert \mathbf x_t) = \delta_{x_{t-1}^i, x_0^i}$. Hence, we get
\begin{equation*}
    \text{D}_{\text{KL}}(q(\mathbf{x}_{t-1} \vert \mathbf{x}_t, \mathbf{x}_0) \vert\vert p_\theta(\mathbf{x}_{t-1} \vert \mathbf{x}_t)) = \sum_{i=1}^L  \delta_{x_t^i = \texttt{[MASK]}}\text{D}_{\text{KL}}(q(x_{t-1}^i \vert x_t^i=\texttt{[MASK]}, x_0^i) \vert\vert p_\theta(x_{t-1}^i \vert \mathbf{x}_t)).
\end{equation*}
Now, note that for $x_{t-1}^i = \texttt{[MASK]}$, the probability $q(x_{t-1}^i \vert x_t^i, x_0^i)$ does not depend on $x_0^i$, resulting in $p_\theta(x_{t-1}^i \vert \mathbf x_t) = q(x_{t-1}^i \vert x_t^i, x_0^i)\ \forall x_0^i$. Therefore, the only non-vanishing additive term in the KL divergence occurs when $x_{t-1}^i = x_0^i$, i.e.,
\begin{equation*}
    D  = \sum_{i=1}^L  \delta_{x_t^i = \texttt{[MASK]}} q(x_{t-1}^i=x_0^i \vert x_t^i=\texttt{[MASK]}, x_0^i) \log \frac{q(x_{t-1}^i=x_0^i \vert x_t^i=\texttt{[MASK]}, x_0^i)}{p_\theta(x_{t-1}^i = x_0^i \vert \mathbf{x}_t)}.
\end{equation*}
Finally, we compute $p_\theta(x_{t-1}^i = x_0^i \vert \mathbf{x}_t)$ provided that $x_t^i = \texttt{[MASK]}$. First, note that $q(x_{t-1}^i=x_0^i | x_t^i = \texttt{[MASK]}, \tilde x_0^i) = \frac{\alpha_{t-1} - \alpha_t}{1-\alpha_t} \delta_{x_0^i, \tilde x_0^i} + \frac{1-\alpha_{t-1}}{1-\alpha_t} \delta_{x_0^i, \texttt{[MASK]}}$. Then, due to the assumption that $p_\theta(x_0^i=\texttt{[MASK]} \vert \mathbf x_t) = 0$, it follows that 
\begin{equation*}
    p_\theta(x_{t-1}^i = x_0^i \vert \mathbf{x}_t) = \mathbb E_{p_\theta(\tilde x_0^i \vert \mathbf x_t)} [ q(x_{t-1}^i=x_0^i | x_t^i=\texttt{[MASK]}, \tilde x_0^i)] = p_\theta(x_0^i \vert \mathbf x_t) q(x_{t-1}^i=x_0^i | x_t^i=\texttt{[MASK]}, x_0^i).
\end{equation*}
Plugging in and canceling equal factors then results in the desired expression
\begin{equation*}
    D = \sum_{i=1}^L   -\delta_{x_t^i = \texttt{[MASK]}} \tfrac{\alpha_{t-1} - \alpha_t}{1-\alpha_t} \log p_\theta(x_0^i \vert \mathbf{x}_t).
\end{equation*}
\end{proof}


\end{document}